\documentclass{article}

\usepackage{microtype}
\usepackage{graphicx}
\usepackage{subcaption}
\usepackage{booktabs}

\usepackage{hyperref}

\usepackage[accepted]{icml2026}

\usepackage{amsmath}
\usepackage{amssymb}
\usepackage{mathtools}
\usepackage{amsthm}
\usepackage{url}
\usepackage{xcolor}

\usepackage{amsmath,amsfonts,bm}

\DeclareMathOperator*{\esssup}{ess\,sup}

\def\eqref#1{equation~\ref{#1}}

\def\1{\bm{1}}

\DeclareMathAlphabet{\mathsfit}{\encodingdefault}{\sfdefault}{m}{sl}
\SetMathAlphabet{\mathsfit}{bold}{\encodingdefault}{\sfdefault}{bx}{n}

\newcommand{\E}{\mathbb{E}}

\newcommand{\xhdr}[1]{\vspace{1mm} \noindent{\bf #1}}

\theoremstyle{plain}
\newtheorem{theorem}{Theorem}[section]

\theoremstyle{definition}

\theoremstyle{remark}

\icmltitlerunning{Queueing-Theoretic Stability for LLM Inference}

\begin{document}

\twocolumn[
  \icmltitle{A Queueing-Theoretic Framework for Stability Analysis of LLM Inference with KV Cache Memory Constraints}

  \begin{icmlauthorlist}
    \icmlauthor{Chengyi Nie}{clu}
    \icmlauthor{Nian Si}{wits}
    \icmlauthor{Zijie Zhou}{wits}
  \end{icmlauthorlist}

  \icmlaffiliation{clu}{Department of Electrical and Computer Engineering, Stony Brook University}
  \icmlaffiliation{wits}{Department of Industrial Engineering and Decision Analytics, HKUST}

  \icmlcorrespondingauthor{Zijie Zhou}{jerryzhou@ust.hk}

  \icmlkeywords{Large language models, Inference, Queueing theory, KV cache}

  \vskip 0.3in
]

\printAffiliationsAndNotice{}

\begin{abstract}
The rapid adoption of large language models (LLMs) has created significant challenges for efficient inference at scale. Unlike traditional workloads, LLM inference is constrained by both computation and the memory overhead of key-value (KV) caching, which accelerates decoding but quickly exhausts GPU memory.
In this paper, we introduce the first queueing-theoretic framework that explicitly incorporates both computation and GPU memory constraints into the analysis of LLM inference. Based on this framework, we derive rigorous stability and instability conditions that determine whether an LLM inference service can sustain incoming demand without unbounded queue growth. This result offers a powerful tool for system deployment, potentially addressing the core challenge of GPU provisioning. By combining an estimated request arrival rate with our derived stable service rate, operators can calculate the necessary cluster size to avoid both costly over-purchasing and performance-violating under-provisioning.
We further validate our theoretical predictions through extensive experiments in real GPU production environments. Our results show that the predicted stability conditions are highly accurate, with deviations typically within 10\%.
\end{abstract}

\section{Introduction} \label{sec:intro}

Since the public release of ChatGPT in late 2022, the world has witnessed the rapid rise and widespread adoption of generative artificial intelligence (AI), particularly large language models (LLMs) \citep{brown2020language,chowdhery2023palm,openai2023gpt,kaplan2020scaling,wei2022emergent}. These models have quickly become embedded in everyday workflows across education, business, entertainment, and research \citep{chatgpt2023,githubcopilot2023,huang2025orlm}. Recent reports \citep{chatgpt_users_2023,gordon_2024} suggest that billions of user interactions have already taken place through LLM-powered applications. While the deployment of LLMs has substantially improved human productivity, this surge in demand also creates unprecedented challenges in reliably and efficiently serving LLM inference at scale.

At the heart of this challenge lies the process of LLM inference—the computation by which a pre-trained LLM receives input prompts and generates output tokens sequentially. Unlike traditional machine learning models, LLMs often require massive computational resources, long response times, and intricate scheduling of GPU clusters. The inherently sequential generation of tokens exacerbates these issues, as each step depends on the model’s state from the previous step. Thus, efficient inference is not merely a matter of computational optimization, but also one of system-level design to balance responsiveness, throughput, and resource costs.

A central difficulty in this design problem is understanding and ensuring the stability of LLM inference under queueing dynamics. From a systems perspective, an inference service can be modeled as a queueing network where user requests arrive randomly, wait for GPU allocation, and are processed token by token. If the arrival rate of requests exceeds the system’s effective service rate, queues will grow unbounded and the service will become unstable, leading to unacceptable latency and degraded user experience. Consequently, it is crucial to estimate the throughput and stability conditions of LLM inference in advance. Accurate estimation enables practitioners to (i) provision the appropriate number of GPUs to meet anticipated demand, (ii) design scheduling algorithms that minimize latency while maximizing utilization, and (iii) develop adaptive scaling strategies that respond to fluctuating workloads in real time.

In modern LLM inference, a crucial technique for improving efficiency is the use of the key–value (KV) cache \citep{pope2023efficiently,kwon2023efficient}. During the process of the LLM inference, instead of recomputing the query, key, and value representations for all previously generated tokens at every step, the model stores the key and value vectors in memory and reuses them for subsequent computations. This mechanism reduces the per-token computational cost of decoding from scaling with the sequence length to scaling only with the new token, thereby greatly accelerating inference. However, this acceleration comes at the expense of substantial GPU memory consumption, since the cache must retain the representations for all tokens in the active context. For requests with long prompts or long generations, the memory footprint can quickly become a bottleneck, often dominating system capacity constraints.

This memory–computation tradeoff introduces unique challenges for queueing-based modeling of LLM inference. Traditional queueing models focus primarily on computation as the limiting resource, but in LLM serving systems, both GPU compute throughput and memory availability jointly determine stability. Therefore, queueing models for LLM inference must explicitly account for the interplay between computation and memory constraints, requiring new formulations that go beyond classical service-rate abstractions.

In this paper, we develop a queueing-theoretic framework that explicitly incorporates KV cache and memory constraints into the analysis of LLM inference. To the best of our knowledge, this is the first queueing model that captures the memory dimension of GPU-based inference, which has emerged as a critical bottleneck in large-scale LLM serving. Our main contributions are threefold:

\begin{enumerate}

    \item \textbf{Modeling.} We introduce a queueing-theoretic model for LLM inference that explicitly incorporates both computational demand and GPU memory constraints, the latter governed by the Key-Value (KV) cache. It generalizes the queueing LLM inference model in \cite{li2025throughput} by adding memory awareness and refines \cite{jaillet2025online} by reformulating its scheduling setup into a queueing framework with a chunked prefill stage. This enables a more realistic characterization of system behavior under varying prompt lengths, output lengths, and request arrival patterns.

\item \textbf{Theoretical Analysis.} Building on this model, we derive rigorous stability and instability conditions for LLM inference systems. These results provide a principled foundation for determining when a serving system can reliably handle incoming workloads, and when unbounded queue growth or system failures may occur.

\item \textbf{Empirical Validation.} We conduct extensive experiments in real GPU production environments, evaluating both single-GPU and multi-GPU configurations. Our results demonstrate that the derived stability conditions are highly accurate in practice, with prediction errors typically within 10\%. These findings confirm the practical relevance of our model and highlight its potential for guiding system design and resource provisioning.
\end{enumerate}

The remainder of the paper is organized as follows. Section \ref{sec:liter} surveys the relevant literature on stochastic bin packing problems and LLM inference. In Section \ref{sec:model}, we present our queueing model for LLM inference. The stability conditions for LLM inference under memory constraints are derived in Section \ref{sec: stability}. Section \ref{sec:num} presents real GPU experiments that validate our theoretical results. Finally, we conclude the paper and discuss limitations and directions for future work in Section \ref{sec:conclusion}.

\section{Literature Review}
\label{sec:liter}

\xhdr{Queueing and LLM Inference.}
Recent work has begun to view large language model (LLM) inference through the lens of queueing theory. \citet{wu2023fast} address the system-level challenge of serving LLMs under strict latency requirements by introducing token-level preemptive scheduling and a  skip-join multi-level feedback queue. While their approach is inspired by queueing theory, it does not provide a precise queueing model or theoretical analysis. Complementing this, \citet{yang2024queueing} develop queueing-theoretic models to analyze the impact of stochastic output token lengths and batching policies, showing that strategies such as token clipping and elastic batching can be quantitatively optimized to balance throughput and delay. Furthermore, \citet{li2025throughput} study throughput-optimal scheduling algorithms and stability conditions.  \citet{guldogan2024multi} study the multi-bin batching strategy also from a queueing perspective. 
However, all of these works do not explicitly model memory constraints or the KV cache, which can lead to substantial errors in scenarios where memory is a tight bottleneck. At a broader level, \citet{mitzenmacher2025queueing} survey the intersection of queueing, prediction, and LLM-specific constraints, highlighting open problems such as robust scheduling with noisy predictions, incorporating memory footprints into queueing models, and reconciling throughput–latency tradeoffs.

\xhdr{Theoretical Modeling for LLM Inference.} A parallel line of theoretical work focuses on scheduling algorithms for LLM inference \citep{jaillet2025online,ao2025optimizing,chen2025adaptively,wang2025llm,chen2026universalloadbalancingprinciple}. These studies typically model LLM inference as a resource-allocation problem in Operations Research. While our goal is to characterize the fundamental stability limit (i.e., the maximum processing rate of a given system) under stochastic prompt arrivals, prior scheduling work primarily focuses on designing policies to improve system efficiency, without deriving closed-form characterizations and without validation on real GPU systems.

\xhdr{Stochastic Bin Packing Problems.} LLM inference with memory constraints is similar to stochastic bin packing problems, or equivalently, queueing systems with packing constraints, which have been well-studied in the literature. In stochastic bin packing problems, each request with a given bandwidth arrives online in a stochastic manner. The system manager must develop scheduling algorithms to serve the requests while satisfying the total bandwidth constraints. This problem was first modeled in \citet{coffman2001bandwidth}. \citet{gamarnik2004stochastic} then derived the stability conditions for this model under the Best-Fit scheduling policy, where the largest requests are allocated first, followed by the next largest, and so on. Subsequently,  \citet{maguluri2012stochastic} later generalized the model beyond additive bandwidth settings to more general configurations, studying the allocation of virtual machines to physical computers and showing that the Best-Fit algorithm is generally not throughput-optimal. \citet{stolyar2013infinite, stolyar2013large, stolyar2015asymptotic} considered the infinite-server version of this problem, where the goal is to minimize the number of servers used. Furthermore, \citet{ghaderi2014asymptotic} proposed a randomized Best-Fit algorithm in this context, and \citet{gupta2015lagrangian} generalized the problem to cases where requests may renege and servers may slow down.

There are several key differences between stochastic bin packing problems and LLM 
inference with memory constraints. First, the memory footprint of a request 
is not a fixed count but evolves approximately linearly as processing 
progresses. Second, LLM inference consists of two distinct phases---the 
prompt phase and the token generation (decode) phase---with different memory 
dynamics. Third, the total available memory is typically much larger than 
the memory requirement of a single request, which naturally places the 
problem in a large-memory regime.
\section{Model} \label{sec:model}

To model the LLM inference process, we formulate a queueing and batching problem for a single computational worker (GPU) operating over the interval \([0, +\infty) \). The worker is subject to a KV cache memory constraint \( M > 0 \). We normalize the KV cache unit from bytes to tokens.

Prompt requests arrive sequentially over time, with their arrivals following a stochastic process, which we shall specify later. Let \( \mathcal{I} \) represent the instance consisting of all prompt requests within the finite time horizon. Each prompt request \( i \) has a size \( s_i > 0 \), defined as the number of tokens in the prompt. Additionally, each request \( i \) is associated with a response length \( o_i > 0 \), representing the number of tokens in the response from autoregressive LLMs. We assume that \(\{s_i,o_i\} \) is independently sampled from a joint distribution $\mathcal{F}_{\text{in-out}}$ with the joint probability mass function of the number of tokens in the prompt and the corresponding response length denoted as $p(s,o)$.

In practice, a system designer selects hardware with a memory capacity $M$ that is significantly larger than the typical request size $(s_i + o_i)$ under the distribution $\mathcal{F}_{\text{in-out}}$. This enables high concurrency by allowing multiple requests to be processed simultaneously. Therefore, we focus on the regime where the memory requirement of a single prompt is much smaller compared with the GPU’s memory capacity, i.e.,	
$
{\esssup_{s,o\sim\mathcal{F}_{\text{in-out}}} \{ s+ o \}}\ll{M}.
$

\xhdr{Prompt Processing.} Each prompt request is processed online and undergoes two primary phases on the GPU worker:

1. \textit{Prompt Phase:} Following a widely adopted approach \cite{agrawal2024taming, ratner2022parallel, yen2024long, an2024training}, we assume that input prompts are divided into equally sized small chunks. We assume that each chunk contains 
$\hat{s}$ number of tokens.   
Therefore, each prompt  will be split into $s_i/\hat{s}$ number of chunks\footnote{To simplify the analysis, we assume that $s_i/\hat{s}$ is a positive integer for all $i$.}. These chunks are processed sequentially. During this phase, the memory required for processing the \(j\)th chunk of prompt \(i\) is \( j \times \hat{s} \), 
meaning memory usage gradually increases as the prompt is processed. When the final chunk is processed, the total memory occupied in the KV cache for prompt \( i \) reaches \( s_i \). Once the prompt is fully processed, the model generates its first output token.

2. \textit{Decode Phase:} After the prompt phase, subsequent tokens are generated sequentially. In this phase, the memory required for processing the \( j \)th token of prompt \( i \) (\( j \in [o_i] \)) is \( s_i + j \). This increase occurs because each new token adds its key and value to the KV cache, contributing additional usages of memory. Consequently, by the time the final token of prompt \( i \) is processed, the total memory usage reaches \( s_i + o_i \). After generating the last token, the KV cache memory allocated to prompt \( i \) is fully released.

\xhdr{Mixed Batching.} A mixed batch may contain any unprocessed prompt chunk or output token from different requests \citep{agrawal2023sarathi,agrawal2024taming}. 
When a prompt chunk is added to a batch, its next chunk is generated after processing, and if it is the final chunk, the first output token is produced once the batch completes. Similarly, when an output token is included in a batch, the next token is generated upon batch completion. 


In LLM services, exceeding the KV cache limit in both the GPU and the CPU causes the system to stop working. The batching constraint ensures that the total memory usage at any time does not exceed \( M \), considering all ongoing prompt requests (i.e., those still being processed or awaiting final output tokens). Formally, let \( S^{(t)} \) denote the set of prompts that have been processed but still have pending output tokens at the end of time \( t \). For each \( i \in S^{(t)} \), define \( c_i^{(t)} \) as the index of prefill chunk of request \( i \) having processed at the end of time \( t \), where $c_i^{(t)} \in \{1,2,\ldots, s_i/\hat{s}\}$, and define \( d_i^{(t)} \) as the index of output token of request \( i \) having processed at the end of time \( t \), where $d_i^{(t)} \in \{0,1,\ldots, o_i\}$. Therefore, if $c_i^{(t)}<s_i/\hat{s}$, the index of output token will always be zero, i.e., $d_i^{(t)}=0$.
The memory constraint requires that, for all \( t \in [0,+\infty) \), at the end of period $t$, the total memory usage must satisfy  
\[
\sum_{i \in S^{(t)}} c_i^{(t)} \hat{s} + d_i^{(t)} \leq M.
\]

\xhdr{Work-conserving Scheduling and Continuous Batching.} In this work, we consider continuous batching. We also allow any work-conserving scheduling policy, meaning the server processes the next-priority prompt whenever it has available capacity. Examples include first-come-first-served (FCFS), as in vLLM \citep{kwon2023efficient} with SARATHI \citep{agrawal2024taming}, and other disciplines such as Shortest Job First (SJF).

To rigorously define the work-conserving scheduling and the continuous batching in mathematics, we consider at any \( t \in [0,T] \), the policy prioritizes all ongoing processing requests and then admits new prompts into the batch according to a specified priority rule. Specifically, a new prompt is added whenever doing so does not violate the memory constraint. If, at a later time, the growing KV cache is about to exceed this constraint, the system can swap the KV cache of some prompts to the CPU at no additional cost. In practice, such swapping is rare. For example, in the vLLM implementation, GPU memory utilization is capped below one
(\texttt{gpu\_memory\_utilization}=0.9 by default).\footnote{\url{https://github.com/vllm-project/vllm/blob/54a66e5fee4a1ea62f1e4c79a078b20668e408c6/vllm/engine/arg_utils.py}}
We set $M$ to this effective memory limit; consequently, small fluctuations above $M$ do not immediately trigger swapping, making such events infrequent in practice.

\xhdr{Batch Processing Time and Arrival Rates.} We normalize batch processing time to one unit of time in our model, with one unit corresponding to $\bar{b}$ seconds. Service decisions are thus made at discrete time slots.We assume that the number of arrivals in each time slot is independent across slots with rate $\lambda$, i.e., the mean number of arrivals per slot is $\lambda \bar b$.
We now justify why this constant-time assumption is appropriate for stability analysis.

The key observation is that stability analysis focuses on the regime in which the system operates near capacity, that is, when memory utilization approaches \(M\). In this regime, the dominant per-batch computational cost is the attention operation, rather than memory I/O, which is bounded by memory bandwidth. When the total KV cache usage satisfies 
$
\sum_{i \in S^{(t)}} \big(c_i^{(t)} \hat{s} + d_i^{(t)}\big) \approx M,
$ 
the total attention computation per batch becomes approximately constant \cite{pope2023efficiently,kaplan2020scaling,hoffmann2022training}. Moreover, our chunked prefill formulation bounds the per-batch prefill cost by fixing the chunk size \(\hat{s}\), thereby preventing the quadratic attention complexity of long prompts from inducing large variance. Real-world traces corroborate this behavior: over 80\% of batches exhibit nearly identical processing times (see Figure~\ref{fig:cdf} and Appendix~\ref{append:num}).

\begin{figure}[htbp]
\centering
\includegraphics[width=0.45\textwidth]{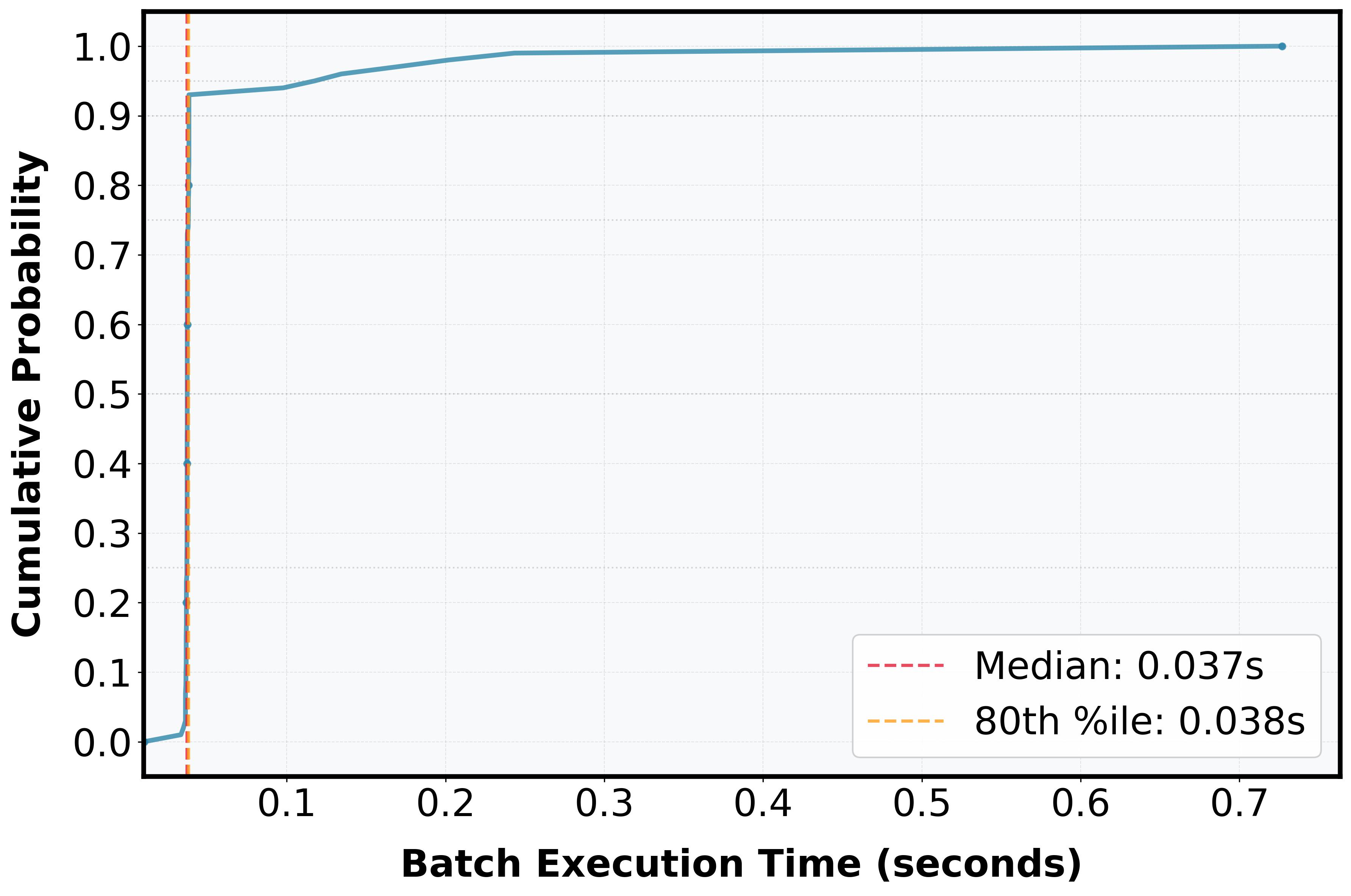}
\caption{Cumulative Distribution Function for Batch Execution Time with PD ratio 1:1 requests}
\label{fig:cdf}
\end{figure}

\xhdr{System Stability.} At each discrete time step, we represent the system state by the set of prompts that have arrived but are not yet completed, $S^{(t)} \cup W^{(t)}$, along with the indices of the prefill chunk and output token processed for each request $i \in S^{(t)} \cup W^{(t)}$ at the end of time $t$, denoted by $\{c_i^{t}, d_i^{t}\}$. Here, \(W^{(t)}\) denotes the set of prompts 
that have arrived but have not yet started processing at time \(t\), where $c_i^{t}=d_i^{t}=0$ for $i\in W^{(t)}$.
This induces a countable-state, discrete-time Markov chain. For simplicity, we assume that the Markov chain is irreducible. In this setting, \textbf{stability} corresponds to the \textbf{positive recurrence} of the Markov chain \citep{dai2020processing, bramson2008stability}. Intuitively, stability means that the backlog of unprocessed prompts does not grow without bound.

\xhdr{Stability Condition.} The arrival rate of the queueing system is  $\lambda$, which is usually easy to estimate. We define the processing rate $\mu$ as the maximum  expected number of requests that can be fully processed per second under continuous utilization (when the server is never idle). In the queueing theory, the stability condition concerns whether the system remains stable as a function of the relationship between $\lambda$ and $\mu$.

However, computing \( \mu \) is highly nontrivial, as it depends on several factors:  
\begin{enumerate}
    \item \textit{KV cache limit $M$: } A larger memory limit allows more requests to be processed simultaneously.  
    \item \textit{Input size and output length joint distributions $\mathcal F_{\text{in-out}}$: } These distributions directly influence the processing time of each request. 
    \item \textit{Scheduling and batching algorithm $\mathcal A$: } Different scheduling and batching strategies affect processing efficiency. In this work, we default FCFS with continuous batching, closely aligned with real-world implementations.
    \item \textit{Averaged processing time $\overline{b}$: } The expected time required to process a batch also impacts the overall processing rate.  
\end{enumerate}

Thus, we express \( \mu \) as a function:  
\[
\mu = f(M, \mathcal{F}_{\text{in-out}}, \mathcal{A}, \overline{b}).
\]  

A key challenge in deriving a closed-form expression for \( f \) arises from the dynamic memory usage during LLM inference. Initially, each request’s prompt chunk consumes minimal memory, allowing many requests to be processed together. However, as processing progresses and output tokens are generated, memory usage grows linearly for each request. This causes the number of requests that can be processed simultaneously to decrease over time, leading to significant variability in batch sizes. The interplay between these factors makes it difficult to determine the exact processing rate \( \mu \). In the next section, we present a closed-form expression for \( f \) and provide a rigorous proof.

\section{The Stability and Instability Conditions}

\label{sec: stability}
In this section, we give rigorous theory on the stability and instability conditions of LLM inference. First, we note that for an  arriving request with an input size \(s\), output length \(o\), and the chunk size \(\hat{s}\), the lifetime cumulative memory usage (summed over the entire service of that request) is
\begin{align*}
& g(s,o)\\
:=&(\hat{s}+2\hat{s}+\cdots+s)+((s+1)+(s+2)+\cdots+(s+o)) \\
= &\frac{(1+s/\hat{s})s+2os+(1+o)o}{2}.
\end{align*}
To begin with, let us
define 
\begin{equation} \label{eq:processing}
  \mu =\frac{M}{\bar{b} \mathbb{E}_{s,o\sim p(s,o)}\Big[g(s,o)\Big] }=
  \frac{\,M }{\;\overline{b}\,\sum_{s,o} g(s,o)\,p(s,o)}.
\end{equation}

We show the stability conditions in two steps: 
\begin{enumerate}
    \item 
We show that \(\lambda > \mu\) implies the system is overloaded. (Theorem \ref{thm:overload}) 
\item  We show that \(\lambda < \mu(1-\delta)\) ensures stability, where $\delta\ll 1$ defined below. (Theorem \ref{thm:stable}).
\end{enumerate}

\begin{theorem} \label{thm:overload}
    If $\lambda > \mu$, then under any scheduling policy, the total number of requests in the system grows to infinity. Namely, the system is overloaded.
\end{theorem}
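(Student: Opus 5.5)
The plan is a work-conservation (fluid/counting) argument based on the fact that the memory constraint caps the total ``memory-time'' the server can deliver. Each time slot corresponds to one batch. At the end of slot $t$ the KV usage of all requests in $S^{(t)}$ is at most $M$. The processing of a prompt chunk or output token in slot $t$ contributes to request $i$'s lifetime cumulative memory exactly the term it adds to $g(s_i,o_i)$. That is, the $j$th chunk contributes $j\hat s$ and the $j$th output token contributes $s_i+j$, and these equal the memory the request holds at the end of that slot. Define $G(t)$ as the total cumulative memory consumed by slot $t$, summed over all requests and all slots up to $t$. The first step is the deterministic bound
\[
G(t)\le M t,
\]
which holds under any scheduling policy. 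Swapping to CPU does not help: the constraint is on what is held on the GPU while being processed, so only processed units count.

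The second step compares this with demand. A request that is fully completed by slot $t$ has consumed exactly $g(s_i,o_i)$. Hence if $D(t)$ is the number of requests departed by slot $t$, then
\[
\sum_{i\le D(t)} g(s_{(i)},o_{(i)})\le M t.
\]
The indices here refer to departed requests, which form a policy-dependent subset and so are not i.i.d. To handle this, bound instead over the arrivals. Let $A(t)$ be the number of arrivals by slot $t$; the mean number per slot is $\lambda\bar b$, so $A(t)/t\to\lambda\bar b$ a.s. by the SLLN for independent slot counts (assuming i.i.d. slots, or at least an LLN). The total work brought in is $W(t)=\sum_{i\le A(t)} g(s_i,o_i)$. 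By the SLLN and the renewal-reward argument, $W(t)/t\to\lambda\bar b\,\mathbb E[g]$ a.s., and $\lambda>\mu$ means exactly $\lambda\bar b\,\mathbb E[g]>M$. The unserved work at time $t$ is therefore at least $W(t)-Mt$, which grows linearly a.s. at rate $\epsilon:=\lambda\bar b\,\mathbb E[g]-M>0$.

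The third step converts unserved work into a number of requests, and this is the main obstacle. Each request present has remaining work at most $g_{\max}:=\sup g(s,o)$, which is finite because the essential supremum of $s+o$ is assumed bounded, indeed much smaller than $M$. The number of requests in the system, $N(t)$, therefore satisfies
\[
N(t)\ge \frac{W(t)-Mt}{g_{\max}},
\]
so $\liminf N(t)/t\ge \epsilon/g_{\max}>0$ a.s., and $N(t)\to\infty$. If the supremum were not finite, I would instead use truncation: requests with $g\le K$ carry expected work $\lambda\bar b\,\mathbb E[g\,\mathbf 1\{g\le K\}]>M$ for large $K$, and the same argument applies to them. This also excludes positive recurrence, since a positive recurrent irreducible chain would return to states with bounded $N$ infinitely often. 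Care is needed with the boundary effect that partially served requests have consumed part of their $g$. This effect is handled exactly by counting unfinished remaining work, which is why I phrase the argument in terms of work rather than completions.
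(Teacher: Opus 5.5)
Your proposal is correct and follows essentially the same route as the paper: the SLLN gives arriving lifetime memory work at rate $\lambda\bar b\,\mathbb{E}[g(s,o)]$ per slot, while the KV constraint caps served work at $M$ per slot, so $\lambda>\mu$ forces the outstanding work to grow linearly. Your version is slightly more complete, because the paper's proof stops at divergence of unprocessed tokens, whereas your per-unit accounting (showing served work is at most $Mt$) and the bound $N(t)\ge (W(t)-Mt)/g_{\max}$ explicitly convert this into divergence of the number of requests, which is what the theorem actually claims.
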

Theorem \ref{thm:overload} is relatively straightforward. If $\lambda > \mu$, the total memory requirement exceeds the maximum GPU memory capacity. The detailed proof is provided in Appendix \ref{sec:proof}.

\begin{theorem} \label{thm:stable}
    Let $\delta =\frac{\esssup_{s,o\sim\mathcal{F}_{\text{in-out}}} \{ s+ o \}}{M}\ll 1. $ If $\lambda < \mu(1-\delta)$, then under the work-conserving scheduling and batching policy described in Section \ref{sec:model}, the system is stable.
\end{theorem}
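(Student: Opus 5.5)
We prove positive recurrence with a Foster--Lyapunov drift argument, using a workload function measured in memory-time units. For each request $i$ in the system define its \emph{remaining cumulative memory} $R_i^{(t)}$ as the sum of the memory footprints it will occupy in all remaining slots of its service. For a waiting request $i\in W^{(t)}$ this is $R_i^{(t)}=g(s_i,o_i)$. For an ongoing request it is the tail of the sum defining $g$, starting after chunk $c_i^{(t)}$ or token $d_i^{(t)}$. The Lyapunov function is the total remaining work $V^{(t)}=\sum_{i\in S^{(t)}\cup W^{(t)}} R_i^{(t)}$, or $(V^{(t)})^2$ if a quadratic form is needed for the drift criterion.

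The arrival side is immediate. In each slot the expected increment from new arrivals is $\lambda\bar b\,\mathbb{E}[g(s,o)]=\lambda M/\mu$ by \eqref{eq:processing}. The hypothesis $\lambda<\mu(1-\delta)$ then gives an expected increment strictly less than $M(1-\delta)$.

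The service side rests on the memory-filling property of the work-conserving policy. Each ongoing request advances by one chunk or token per slot, and it removes from $V$ exactly the memory it occupies after that step, namely $c_i\hat s$ or $s_i+d_i$. So the decrement of $V$ in a slot equals the total memory in use at the end of the slot, $\sum_{i\in S^{(t)}}(c_i^{(t)}\hat s+d_i^{(t)})$. I then claim that whenever $W^{(t)}\neq\emptyset$, this total is at least $M-\max_i(s_i+o_i)\ge M(1-\delta)$. The reason is that the policy admits the next waiting prompt whenever doing so keeps memory within $M$. If the head-of-line prompt cannot be admitted, the memory in use must already be close to $M$. A new prompt's first chunk needs only $\hat s\le s+o$ memory, and the policy must respect the future growth of admitted requests up to their peak $s_i+o_i$. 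Either way, the gap to $M$ is at most one request's peak footprint.

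\textbf{This is the step I expect to be the main obstacle.} The difficulty is defining precisely when a prompt is ``admissible'' so that the slack bound holds at every slot while a queue exists, not just at admission times. Swapping to the CPU complicates the bookkeeping. I would handle this by treating the slack bound as holding at admission epochs. I would then argue, using the linear growth of footprints, that the memory in use between admissions is nondecreasing except at completions. Each completion frees memory that is immediately refilled by the queue, since the queue is nonempty.

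Combining the two sides, on the event that $W^{(t)}\neq\emptyset$ we get
\[
\mathbb{E}\bigl[V^{(t+1)}-V^{(t)}\mid \text{state}\bigr]\le \lambda M/\mu - M(1-\delta)=:-\epsilon<0 .
\]
The remaining states are those with $W^{(t)}=\emptyset$. There the ongoing set fits in memory $M$, so $V$ is bounded by a constant of order $M\cdot \esssup(s+o)$ times the number of requests, which is itself at most $M/\hat s$. Hence $\{W=\emptyset\}$ lies in a finite set of states, assuming bounded support so that the state space is effectively finite there. Outside this finite set the drift is $\le-\epsilon$, and arrivals have finite mean. Foster's criterion for the irreducible countable-state chain therefore yields positive recurrence, i.e.\ stability.
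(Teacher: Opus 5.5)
Your architecture matches the paper's: the same Lyapunov function (remaining KV ``area,'' with waiting requests contributing $g(s_i,o_i)$), the observation that each slot erases exactly the memory in use, the arrival bound $\lambda\bar b\,\mathbb{E}[g]=\lambda M/\mu$, a memory-filling bound $M(1-\delta)$ from maximal batching, and Foster's criterion. The gap is the region where you claim negative drift. The drift from $t$ to $t+1$ needs the batch formed in slot $t+1$ to use at least $M(1-\delta)$. The event $W^{(t)}\neq\emptyset$ does not guarantee this.

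Take the state with no running requests and a single waiting prompt. This arises if every running request finished its last token in slot $t$ while one prompt was blocked, or if a lone request arrived into an empty system. The state lies in your set $\{W^{(t)}\neq\emptyset\}$. Yet the slot-$t+1$ batch contains only first chunks, of that prompt and of any fresh arrivals. So $V$ falls by $\hat s$ per admitted request, while each arrival adds $g(s,o)$, which is much larger. For $\lambda$ near $\mu(1-\delta)$ the expected change is therefore positive, so no uniform $-\varepsilon$ bound holds on $\{W^{(t)}\neq\emptyset\}$. Your proposed repair, that freed memory ``is immediately refilled by the queue, since the queue is nonempty,'' fails for the same reason: a short queue cannot refill up to $M$ units of freed memory.

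The fix, which is the paper's route, is to condition on a \emph{large} backlog rather than a nonempty queue. If $V(t)>K:=M\sup g$, there are more than $M$ unfinished requests. Each needs at least one unit of memory in its next step, so the next slot's maximal batch must leave some request out. That request's next-step footprint is at most $\esssup(s+o)=\delta M$. Hence the batch uses at least $M(1-\delta)$, or the omitted request could still be added.

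This argument works slot by slot, so you need none of the admission-epoch and monotonicity bookkeeping. The exceptional set $\{V\le K\}$ is still finite under bounded support.

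Separately, your ``either way'' remark is wrong for a peak-reservation admission rule. Under such a rule the memory actually in use, which is what decreases $V$, can sit well below $M(1-\delta)$ even when the reserved total is near $M$. The $M(1-\delta)$ bound relies on the model's greedy current-footprint admission, with swapping absorbing overflow.
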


\begin{proof}[Proof Sketch]
Our stability analysis builds on a Lyapunov argument \citep{bremaud2013markov}, adapted to the unusual memory dynamics of LLM inference. The key idea is to track the \emph{total outstanding memory demand} $V(t)$, defined as the cumulative future KV cache usage required by all active or waiting requests at time $t$. This quantity grows as new requests arrive, and decreases as the GPU processes batches.

The challenge is that a request’s KV usage is piecewise linear: it grows by chunks in the prompt phase and then by tokens in the decode phase. We overcome this by collapsing each request’s entire future KV trajectory into a single \emph{lifetime memory footprint} $g(s,o)$, and $V(t)$ behaves like “area left to erase.” Per slot (of length $\bar b$), processing with KV utilization $U_t$ \emph{reduces} $V$ by exactly $U_t$ (we are erasing that much area), independent of whether the consumed work comes from prompt or decode tokens. Under the work-conserving scheduling policies and the memory slack $\delta:=\esssup_{s,o}(s+o)/M$, a large backlog implies a \emph{saturation} property: the batcher can keep utilization $U_t\ge M(1-\delta)$ throughout the slot without risking overflow (the $\delta$ fraction is the headroom needed to accommodate the largest single job).

Meanwhile, new arrivals in a slot contribute an \emph{expected} increase of $\lambda \bar b\,\E[g(s,o)]$ to $V$. Hence, the conditional one-step drift satisfies
\[
\mathbb{E}[V(t{+}1)-V(t)\mid\text{state at }t]
\;\le\;
-\,M(1-\delta)\;+\;\lambda \bar b\,\E[g(s,o)].
\]
If $\lambda \bar b\,\E[g(s,o)]<M(1-\delta)$, i.e., $\lambda<\mu(1-\delta)$, the drift is strictly negative outside a compact set. By the Foster–Lyapunov criterion, the Markov chain is positive recurrent, so the system is stable. Intuitively, once the backlog is large, the GPU continuously “erases area” at rate $M(1-\delta)$, while the stochastic inflow adds area at rate $\lambda \bar b\,\E[g(s,o)]$; the former dominates under the stated condition.
\end{proof}

The detailed proof is provided in Appendix \ref{sec:proof}. In practice, a GPU’s total memory far exceeds the memory needed for a single request. Consequently, Theorems \ref{thm:overload} and \ref{thm:stable} together offer a practical criterion for assessing the stability of LLM inference.

If we have reliable estimates of the arrival rate $\lambda$ and the service rate $\mu$, this information can not only provide the stability condition for a single-GPU setting, but also guide the flexible provisioning of multiple GPUs. For instance, if system managers aim for a target utilization rate of $\rho = 90\%$, they can estimate the required number of GPUs as approximately $\lambda  / (\mu \cdot \rho)$. This approach enables efficient resource allocation, ensuring that the system maintains high throughput while avoiding excessive queuing or memory bottlenecks. Moreover, our results with dynamic estimates of $\lambda$ and $\mu$ can support dynamic scaling strategies, where GPUs are added or removed in response to fluctuating workloads, further improving operational efficiency.

\section{Numerical Experiments} \label{sec:num}

In this section, we conduct numerical experiments to validate the accuracy of our theoretical stability condition against real-world measurements. Section~\ref{subsec:implemenation} describes the details of our implementation and hardware environment. Section~\ref{subsec:single-GPU} verifies our theory in the single-GPU setting, showing that the error remains within 10\%. Section~\ref{subsec:multi-GPU} demonstrates the robustness and generalizability of our theory in multi-GPU settings, where it still achieves near-perfect predictive power.

\subsection{Implementation and Hardware Details}
\label{subsec:implemenation}
\xhdr{Testbed. } Our hardware setup uses eight identical NVIDIA A100 GPUs, with one GPU per replica; we disable intra-model parallelism (tensor parallel size = 1, pipeline stages = 1), so all concurrency comes from replica-level data parallelism across the 8 GPUs. The served LLM is Meta-Llama-3-8B. Each replica runs the vLLM v1 \citep{kwon2023efficient} inference engine with chunked prefill enabled: prompts are split into fixed-size segments with $\hat{s}=512$ to bound per-step memory use and interleave admission of other requests.


\xhdr{Workload Distributions. } We evaluate our stable condition under three distinct prefill-decode (PD) ratio regimes by defining different joint distributions $\mathcal{F}_{\text{in-out}}$ for input (prefill) and output (decode) token lengths: 
\begin{enumerate}
\item \textbf{PD Ratio 1:1:} Prefill (input length) and decode (output length) are independently sampled from $\text{Uniform}(10, 1600)$ respectively.
\item \textbf{PD Ratio 2:1:} Prefill and decode are independently sampled from $\text{Uniform}(10, 2133)$ and $\text{Uniform}(10, 1066)$ respectively.
\item \textbf{PD Ratio 1:2:} Prefill and decode are independently sampled from $\text{Uniform}(10, 1066)$  and $\text{Uniform}(10, 2133)$ respectively.
\end{enumerate}
Here, we control the decode length per request using the functionality in vLLM v1, which allows setting upper and lower bounds. For instance, in the 1:1 PD ratio setup, we sample a prefill length and set both the lower and upper decoding bounds to this value, thereby enforcing the desired ratio.

\xhdr{Parameter Selection. } The memory capacity parameter $M$ is set to 131,000 tokens, based on the observed maximum capacity of the GPU  memory in our testbed.

To determine the characteristic processing time $\bar{b}$, we adopted a data-driven approach. Empirical analysis of real traces Figure \ref{fig:cdfappend} in Appendix \ref{append:num} revealed that the median batch processing time is a robust estimator, aligning closely with the 90th percentile in many cases. For each experimental setup, we simulated 10,000 jobs from the respective distribution to form a training dataset. We then computed the median processing time of these batches to obtain $\bar{b}$ for that specific workload:
\begin{itemize}
    \item \textbf{PD Ratio 1:1: }  $\bar{b} = 0.0372$s;
    \item \textbf{PD Ratio 2:1: }  $\bar{b} = 0.0430$s; 
    \item \textbf{PD Ratio 1:2: }  $\bar{b} = 0.0337$s.
\end{itemize}

\subsection{Single-GPU Results}
\label{subsec:single-GPU}
We evaluate the accuracy of our theoretical stable processing rate, $\mu_{\text{theory}}$ (from Equation \eqref{eq:processing}), against empirically measured values from the GPU, $\mu_{\text{gpu}}$. To ensure our measurement captures the system's steady-state behavior, we calculate the empirical processing rate $\mu_{\text{gpu}}$ by excluding the first and last 1,000 requests, which may be affected by initial warm-up and termination phases. The measurement time interval is defined from the arrival time of the 1,001st request to one of the last 1,000th request to make sure that requests that completed during the termination phases are not included as by including these requests, the latency statistics can be artificially improved. The empirical rate is then computed as the total number of processed requests within this interval divided by its duration. Therefore, $\mu_{\text{gpu}}$ can be interpreted as the averaged number of requests can be completed by the real GPU within 1 second. To quantify the accuracy of our model, we use the Gap Absolute Percentage (GAP):
\[
\text{GAP} = \frac{|\mu_{\text{theory}} - \mu_{\text{gpu}}|}{\mu_{\text{gpu}}}.
\]

\begin{table}[h]
\centering
\caption{Stable Condition of Single GPU under Different PD Ratios. }
\label{tab:stable_condition}
\begin{tabular}{lccc}
\toprule
\textbf{P/D Ratio} & \textbf{$\mu_{\text{gpu}}$} & \textbf{$\mu_{\text{theory}}$} & \textbf{GAP} \\
\midrule
$1:1$ & $3.387$ & $3.263$ & $3.66\%$ \\
$2:1$ & $3.650$ & $3.956$ & $8.38\%$ \\
$1:2$ & $2.969$ & $2.902$ & $2.25\%$ \\
Mixed $2:1$ $\to$ $1:2$ & $3.137$ & $3.385$ & $7.90\%$ \\
\bottomrule
\end{tabular}
\end{table}

In Table \ref{tab:stable_condition}, all values converge when the arriving rate $\lambda \geq 5$. As shown in the table, the theoretical processing rates derived from Equation \eqref{eq:processing} closely align with the empirically measured rates from the GPU hardware. Across all three fixed PD ratio settings (1:1, 2:1, and 1:2), the GAP remains below 10\%, demonstrating the high accuracy of our model. 
We also evaluate a time-varying request-size distribution. 
The trace consists of two consecutive segments: the first half of requests follows the PD $2{:}1$ distribution, and the second half follows the PD $1{:}2$ distribution. 
For a trace in which a fraction $q_\ell$ of requests follows distribution $F_\ell$, the predicted long-run processing rate is
\[
\mu_{\rm mix}
=
\frac{M}{\sum_{\ell} q_\ell \bar b_\ell \mathbb E_{F_\ell}[g(s,o)]}
=
\left(\sum_{\ell} \frac{q_\ell}{\mu_\ell}\right)^{-1}.
\]
For $q_1=q_2=1/2$, this gives $\mu_{\rm theory}=3.385$, while the measured rate is $\mu_{\rm gpu}=3.137$, yielding a $7.90\%$ gap. 
This remains within the empirical error range observed in the stationary workloads, suggesting that the formula remains a useful long-run capacity predictor under piecewise-stationary workload distributions. Overall, these results are practically significant, as they enable reliable estimation of the required number of GPUs during system deployment to ensure stability under expected workloads.

We now examine the system dynamics under a 1:1 P/D ratio by varying the arrival rate $\lambda$. Figure \ref{fig:queue} plots the evolution of queue length over time for arrival rates of $\lambda = 5, 20,$ and $50$ requests per second. The queue length increases approximately linearly in all cases, indicating the system overload where the arrival rate exceeds the service rate. For arrival rates of $\lambda=1, 3$, the queue length is usually upper bounded by $5$, indicating the system is stable where the service rate exceeds the arrival rate.

\begin{figure}[htbp]
    \centering
    
    \begin{subfigure}[b]{0.4\textwidth}
        \centering
        \includegraphics[width=\textwidth]{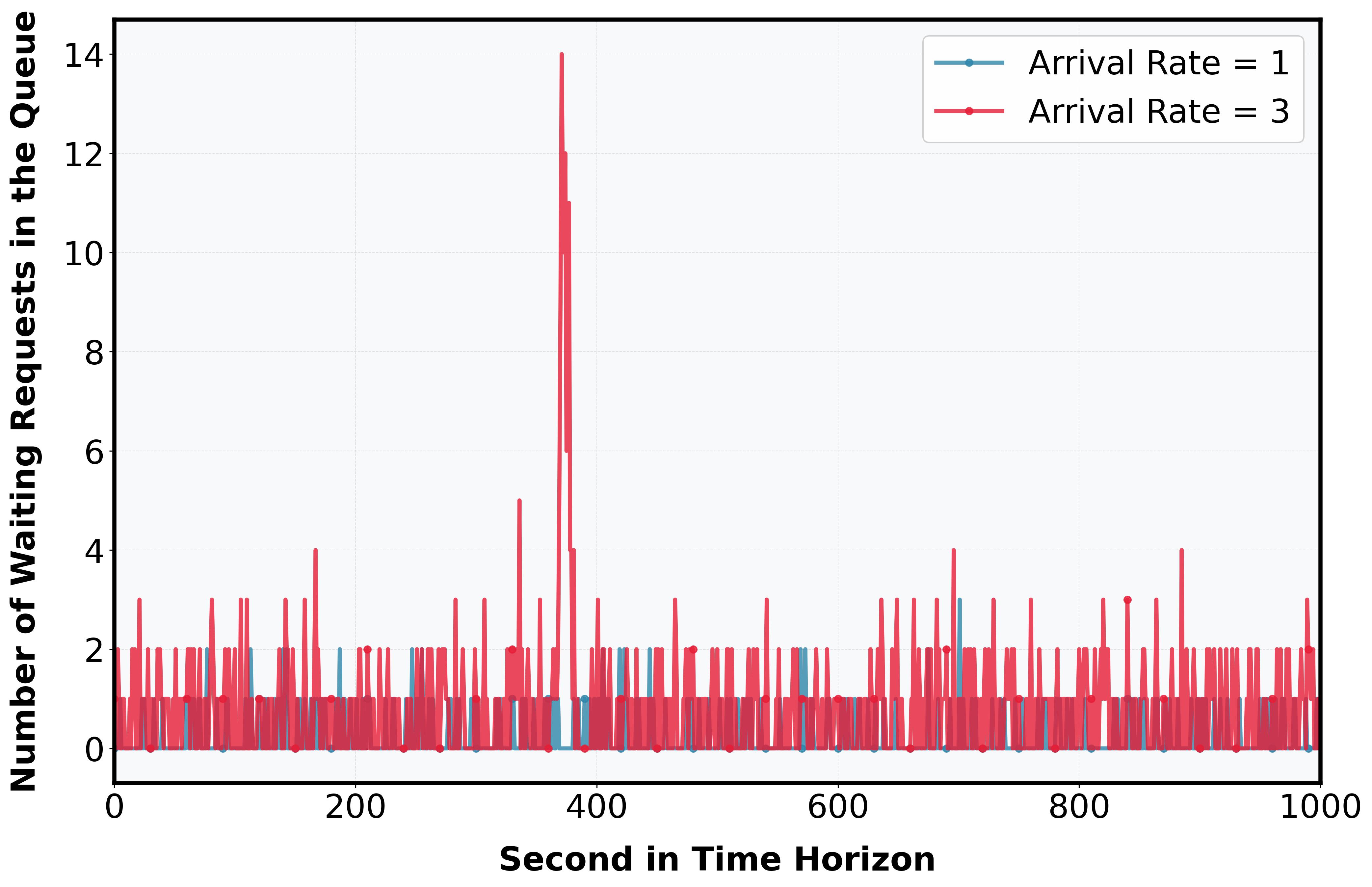}
        \caption{$\lambda=1,3$}
        \label{fig:queueunder}
    \end{subfigure}
    \hfill
    \begin{subfigure}[b]{0.4\textwidth}
        \centering
        \includegraphics[width=\textwidth]{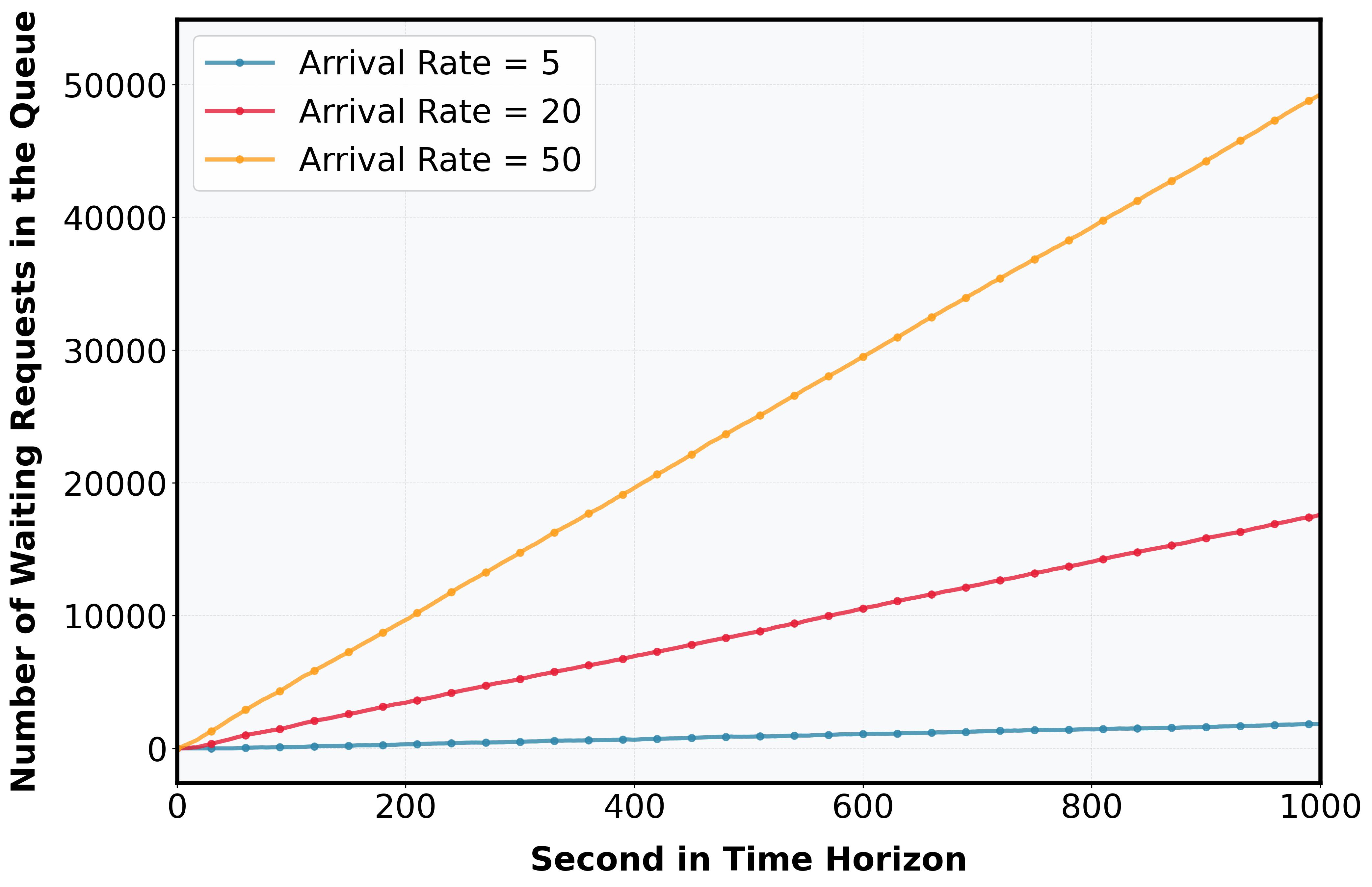}
        \caption{ $\lambda=5,20,50$}
        \label{fig:queue}
    \end{subfigure}
    
    \caption{Number of waiting requests in the queue under different arrival rates.}
    \label{fig:queue_comparison}
\end{figure}

Furthermore, we plot the CDF of the per-request waiting time (i.e., the delay between arrival and the start of processing) in Appendix \ref{append:num}, Figure \ref{fig:queuevisual}, and the findings are consistent to Figure \ref{fig:queue_comparison}.

\paragraph{Real Dataset Experiment}

\begin{figure}[!h]
    \centering
    \begin{subfigure}[b]{0.4\textwidth}
        \centering
        \includegraphics[width=\textwidth]{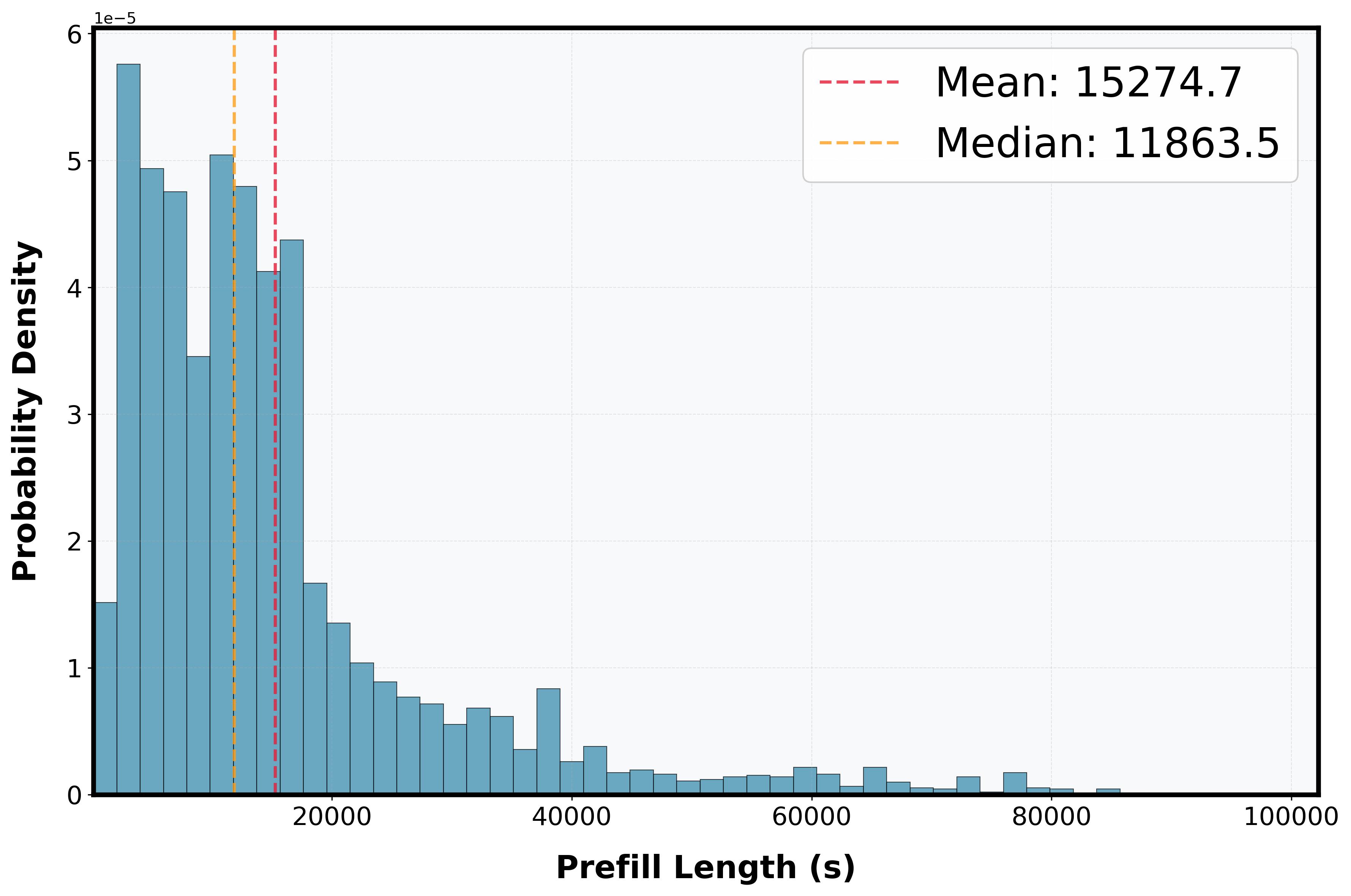}
        \caption{Marginal probability density function of prefill length $(s)$}
        \label{fig:mp}
    \end{subfigure}
    \begin{subfigure}[b]{0.4\textwidth}
        \centering
        \includegraphics[width=\textwidth]{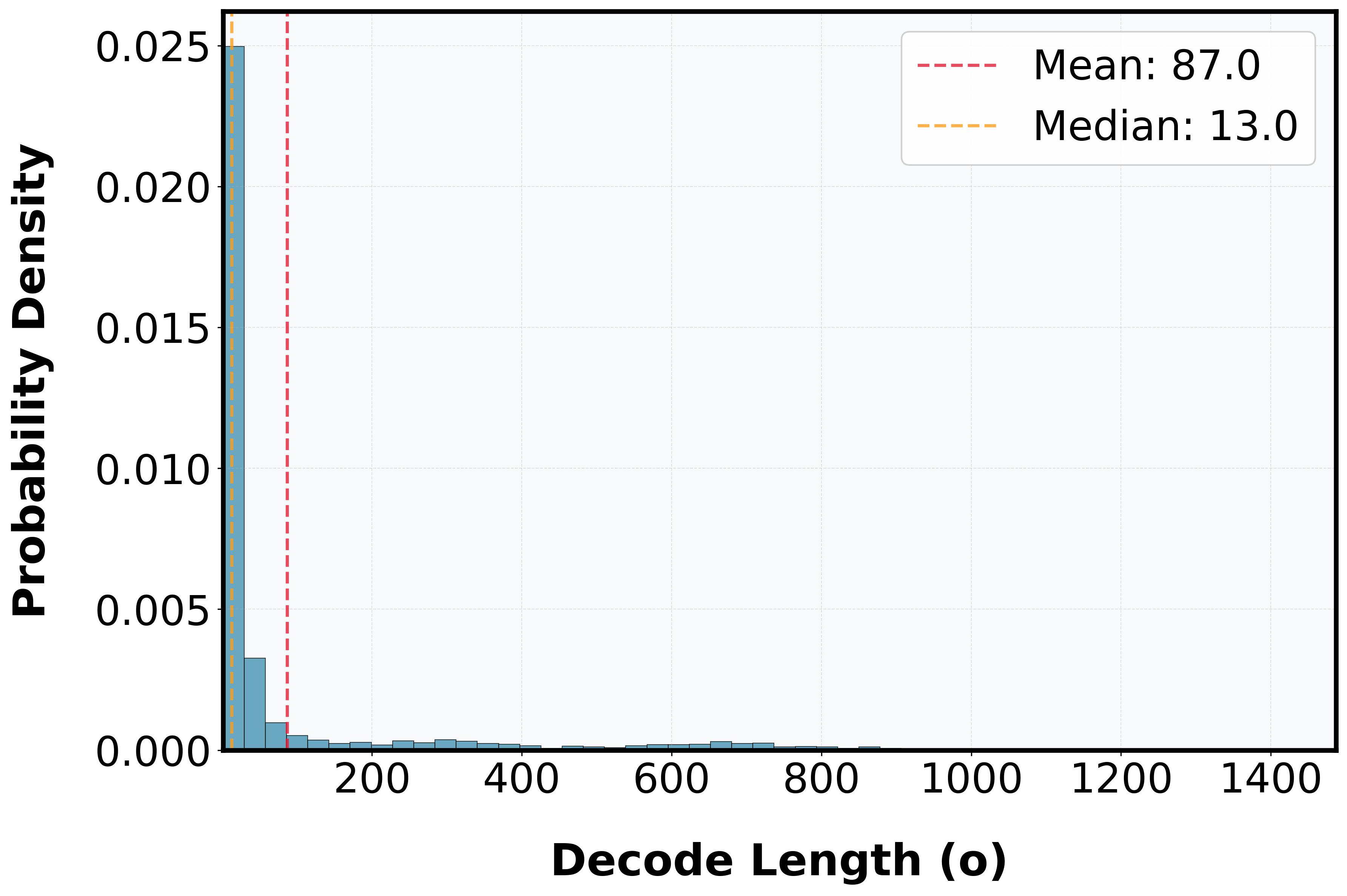}
        \caption{Marginal probability density function of decode length $(o)$}
        \label{fig:md}
    \end{subfigure}

    \centering
    
    \begin{subfigure}[b]{0.4\textwidth}
        \centering
    \includegraphics[width=\textwidth]{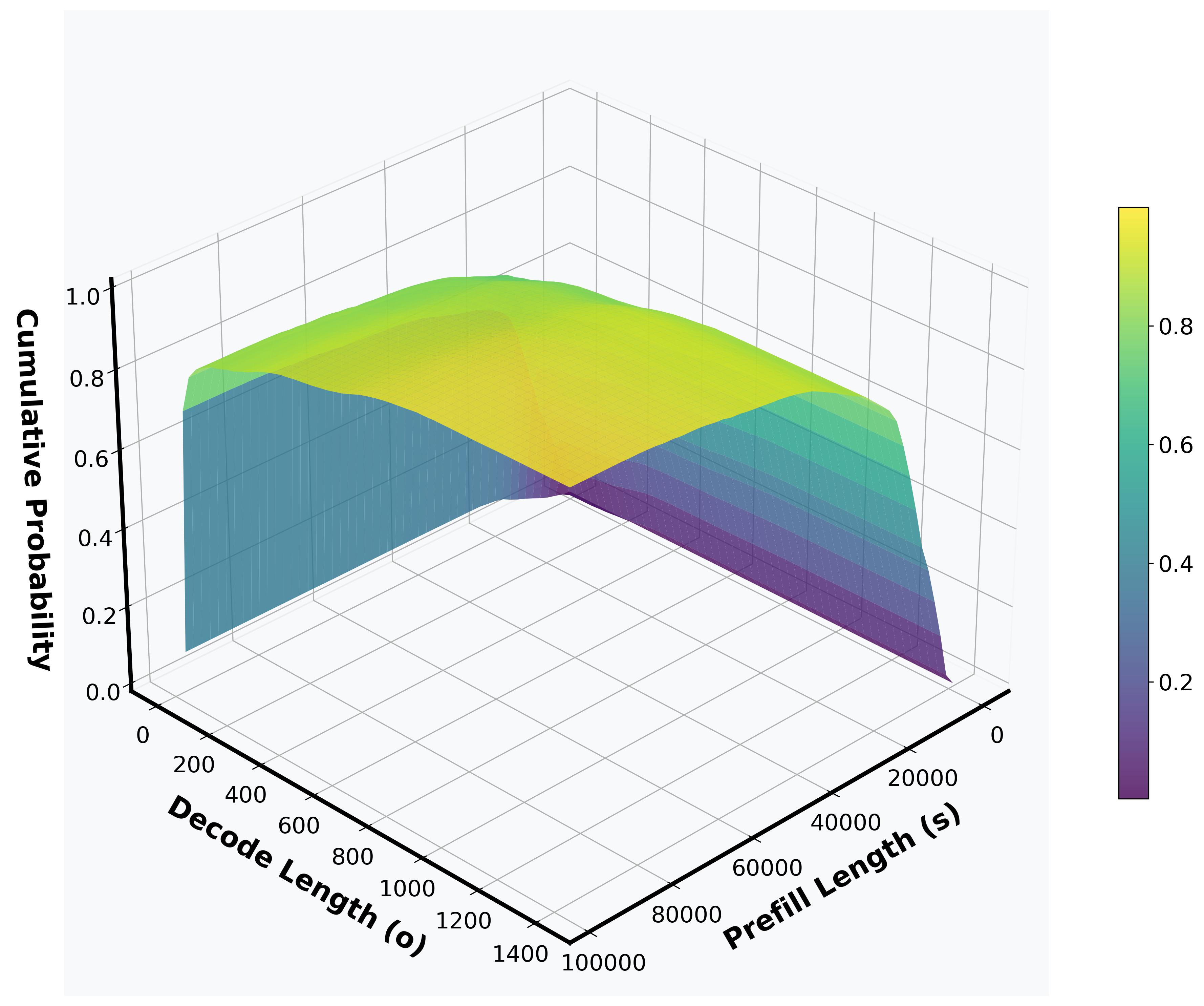}
        \caption{Joint cumulative density function of prefill length and decode length $(s,o)$}
        \label{fig:jpd}
    \end{subfigure}
    
    \caption{Joint PDF and CDF of the LongBench v2 dataset.}
    \label{fig:stat}
\end{figure}

To validate our findings beyond simulations with fixed prefill/decode (P/D) ratios, we evaluate our model on the LongBench v2 dataset~\citep{bai2025longbench}, which features highly variable and dependent P/D lengths. This benchmark contains 503 complex, long-context questions, providing a realistic distribution of request complexities.  Figure~\ref{fig:stat} visualizes this relationship, showing the marginal probability densities of prefill (s) and decode (o) lengths, alongside their joint cumulative distribution function. The dataset is publicly accessible at  \url{https://longbench2.github.io/}.

In our experiment setup, we partition the dataset randomly, using 80\% to compute the joint probability density function $p(s,o)$ and to estimate the mean batch processing time $\bar{b}$ via a 10\% trimmed average (to mitigate the effect of heavy-tailed outliers). The remaining 20\% is used for testing under a load of 20 queries per second (qps).

\begin{table}[h]
\centering
\caption{Stable Condition of Single GPU under the LongBench v2 dataset.}
\label{tab:stable_condition_real}
\begin{tabular}{lcc}
\toprule
 \textbf{$\mu_{\text{gpu}}$} & \textbf{$\mu_{\text{theory}}$} & \textbf{GAP} \\
\midrule
$0.610$ & $0.561$ & $8.03\%$ \\
\bottomrule
\end{tabular}
\end{table}

As shown in Table~\ref{tab:stable_condition_real}, the theoretical stable condition $\mu_{\text{theory}}$ closely matches the empirically observed condition $\mu_{\text{gpu}}$, with an error of only 8.03\%. This strong agreement underscores the importance of modeling the full joint distribution $p(s,o)$ rather than relying on simplified expectations, as it effectively captures the complex dependencies and high variance between prefill and decode lengths inherent in real-world workloads.

\begin{figure}[!thbp]
    \centering
    
    \begin{subfigure}[b]{0.4\textwidth}
        \centering
        \includegraphics[width=\textwidth]{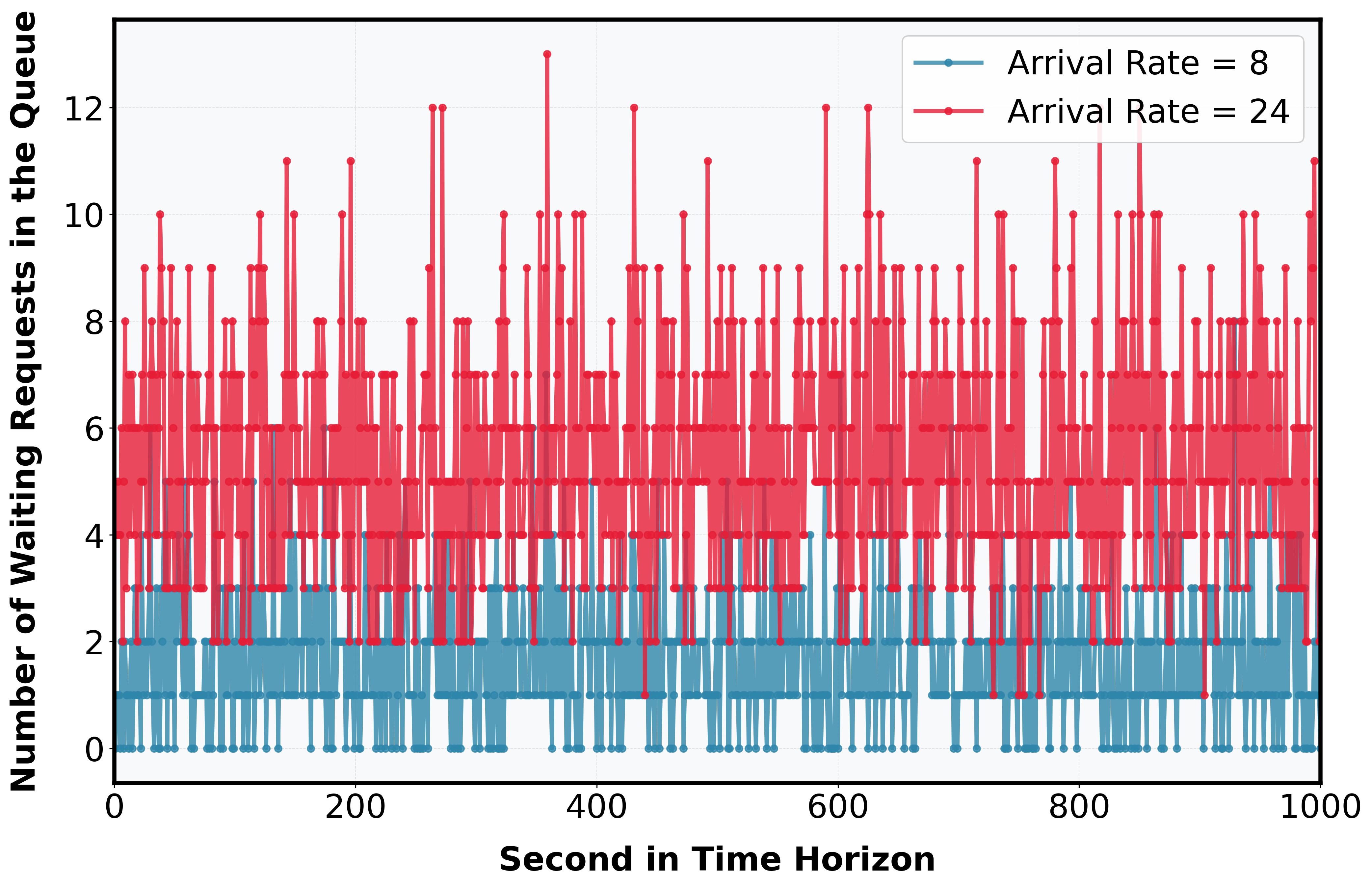}
        \caption{$\lambda=8,24$}
        \label{fig:queueunder}
    \end{subfigure}
    \hfill
    \begin{subfigure}[b]{0.4\textwidth}
        \centering
        \includegraphics[width=\textwidth]{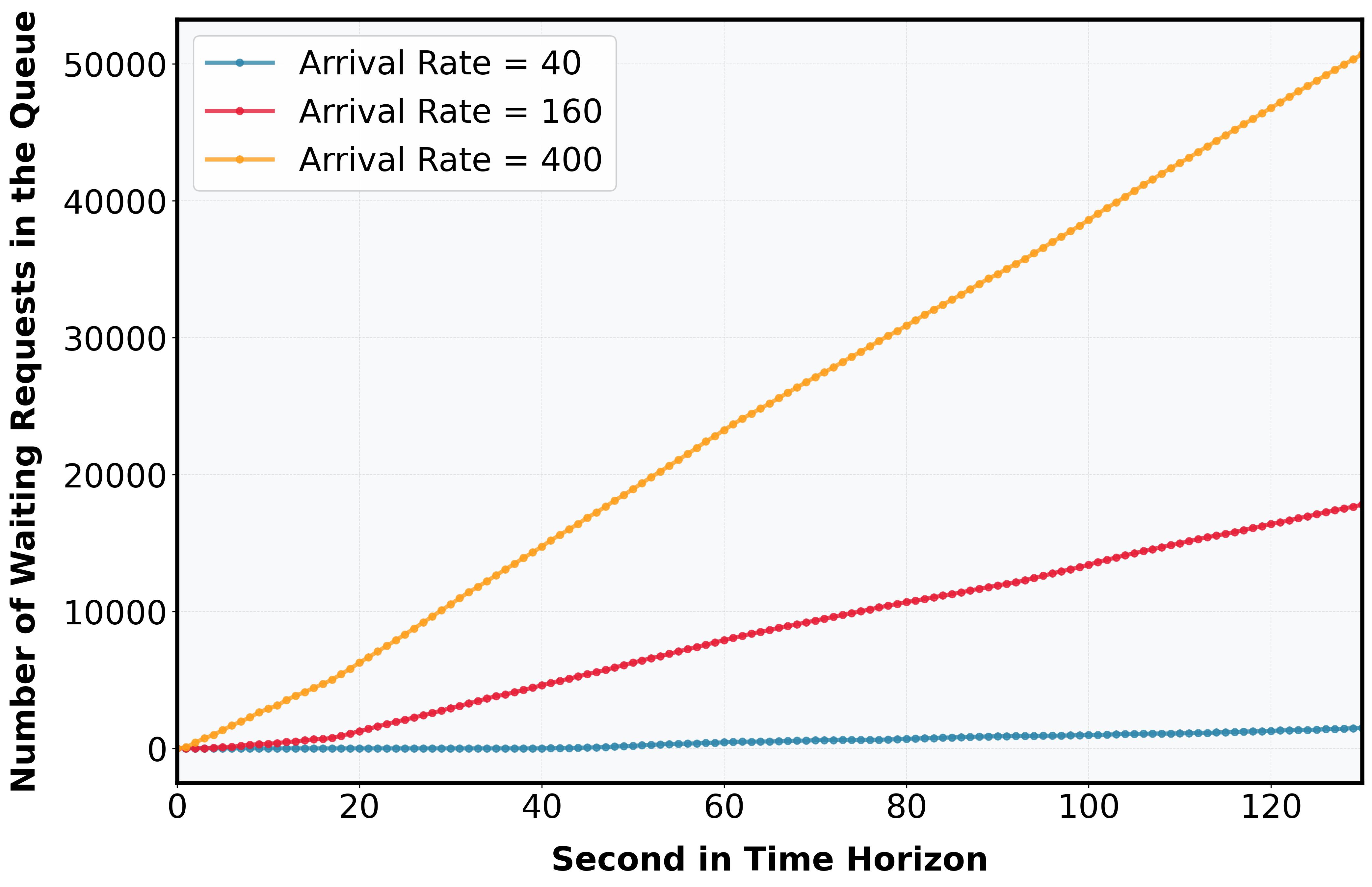}
        \caption{ $\lambda=40,160,400$}
        \label{fig:queue}
    \end{subfigure}
    
    \caption{Number of total waiting requests among $8$ GPUs in the queue under different arrival rates.}
    \label{fig:queue_comparison_8}
\end{figure}

\subsection{Eight-GPU Results}
\label{subsec:multi-GPU}
Next, we conduct 8-GPU experiments to compare the real processing rate and the theoretical rate multiplies by $8$, i.e. $8\mu_{\text{theory}}$, to see whether our approximation still is accurate or not in the cluster deployment. In our 8-GPU configuration, we deploy eight identical single-GPU replicas (NVIDIA A100) serving Meta-Llama-3-8B with no intra-model parallelism. Incoming requests are statelessly load-balanced across replicas using the round robin over $56,000$ requests. The PD ratio is $1:1$ and again the prefill and decode are independently sampled from $\text{Uniform(10,1600)}$. 
\begin{table}[h]
\centering
\caption{Stable Condition of 8 Parallel GPUs.}
\label{tab:stable_condition_8}
\begin{tabular}{lccc}
\toprule
\textbf{P/D Ratio} & \textbf{$\mu_{\text{gpu}}$} & \textbf{$\mu_{\text{theory}}$} & \textbf{GAP} \\
\midrule
$1:1$ & $26.710$ & $25.808$ & $3.38\%$ \\
\bottomrule
\end{tabular}
\end{table}

As shown in Table \ref{tab:stable_condition_8}, our theoretical model achieves strong convergence with empirical results for arrival rates $\lambda \geq 40$, corresponding to a per-GPU rate of $\geq 5$ requests per second. The gap between the empirical processing rate $\mu_{\text{gpu}}$ and the theoretical rate $\mu_{\text{theory}}$ is only 3.38\%, demonstrating the model's high accuracy. This validates the practical utility of our approach for capacity planning: it provides an accurate lower bound on the number of GPUs required to ensure system stability—defined as a regime where no GPU is idle and the queue length remains bounded—for a given workload distribution.

Again, in Figure \ref{fig:queue_comparison_8}, we visualize the total queue length of $8$ GPUs during processing time horizon. The insights is similar to the ones for single GPU in Figure \ref{fig:queue_comparison}: while the rates $\lambda=40,160,400$ are greater than the service rate $\mu_{gpu}=26.710$, the queue length linearly grows and the system is overloaded. For the rates $\lambda=8,24$, which are smaller than the service rate, the queue length is upper bounded by $13$, indicating the system is stable.

\section{Conclusion and Discussion}
\label{sec:conclusion}

This paper presents a queueing-theoretic model for LLM inference, characterizing stable GPU operation by accounting for computational demand and KV cache memory constraints. We derived the stability condition and critical processing rate $\mu$ for a single GPU, validated empirically on a real GPU testbed with less than 10\% discrepancy between theoretical and measured service rates across various workloads. This provides a foundation for capacity planning, enabling system designers to estimate the minimum number of GPUs, $\lceil \lambda / \mu \rceil$, using the request arrival rate $\lambda$ and per-GPU service rate $\mu$, ensuring efficient resource provisioning to avoid over-provisioning (idle GPUs) or under-provisioning (missed service-level objectives).

\paragraph{Scope and architectural extensions.}
Our experiments focus on single-GPU replicas and data-parallel multi-GPU deployment. 
Tensor parallelism can be incorporated as a parameter-level extension by treating a TP group as one logical worker: the queueing structure and the formula for $\mu$ remain unchanged after replacing $M$ and $\bar b$ by their TP-specific effective values. 
These quantities should be measured empirically, since communication overheads such as all-reduce prevent linear scaling in the TP degree. 
By contrast, pipeline parallelism and prefill-decode disaggregation change the queueing topology into tandem or networked queues with separate memory constraints and inter-stage transfers. 
Deriving closed-form stability conditions for these architectures is an important direction for future work.

\section*{Impact Statement}

This work provides a principled foundation for designing and operating large-scale LLM inference systems under realistic memory constraints. Its potential impact lies in enabling practitioners to quantitatively balance capacity and demand, thereby reducing both over-provisioning—which wastes energy and capital—and under-provisioning, which degrades user experience and system reliability. As LLMs become critical infrastructure across education, healthcare, and industry, such principled capacity-planning tools are essential for making AI services efficient, sustainable, and dependable at scale.

\bibliography{iclr2026_conference}
\bibliographystyle{icml2026}

\appendix
\onecolumn

\section{Proofs of Theorem \ref{thm:overload} and \ref{thm:stable}}
 \label{sec:proof}

\begin{proof}[Proof of Theorems \ref{thm:overload}]
We will show that the total outstanding memory needed in the GPU(including the chunks in the prompt phase) diverges to infinity.

For each arriving request with an input size \(s\), output length \(o\), and the chunk size \(\hat{s}\), the cumulative memory usage in the GPU (summed over the entire service of that request) is
\[
g(s,o)=(\hat{s}+2\hat{s}+\cdots+s)+((s+1)+(s+2)+\cdots+(s+o)) = \frac{(1+s/\hat{s})s+2os+(1+o)o}{2}.
\]

As the p.m.f. of joint distribution of arrival is $p(s,o)$, than the expected cumulative memory usage is 
\[
   \mathbb{E}_{s,o\sim p(s,o)}\Big[\tfrac{(1+s/\hat{s})s+2os+(1+o)o}{2}\Big] 
\]
Given that the arrival process has mean $\lambda \bar b$ in each slot, the expected total memory demand over the time period $[0,T]$ (assuming $T$ is a multiple of $\bar b$) is
\[
   \lambda T\mathbb{E}_{s,o\sim p(s,o)}\Big[\tfrac{(1+s/\hat{s})s+2os+(1+o)o}{2}\Big]. 
\]
By the strong law of large numbers for i.i.d.\ arrivals, we have
\begin{equation}
\frac{1}{T}\sum_{i=1}^{N(T)} g(s_i,o_i)
\;\rightarrow\;
\lambda \,\mathbb{E}_{s,o\sim p(s,o)}
\Big[\tfrac{(1+s/\hat{s})s+2os+(1+o)o}{2}\Big], \text{ almost surely,}
\label{LLN}
\end{equation}
where $N(T)$ denotes the number of arrivals up to time $T$, and
$s_i$ and $o_i$ are the prompt length and decode length of the $i$-th
arrival, respectively.

Now, we turn to the service side. As each batch takes \(\overline{b}\) seconds, we can take \(\tfrac{1}{\overline{b}}\) batches per second.  
Moreover, at any instant in time, the memory usage cannot exceed \(M\). This implies that the maximum total memory supply over the time period $[0,T]$  is:
\[
   T\times M
\text{ (units of memory)} 
   \times
   \frac{1}{\overline{b}}
\text{ (batches/second)}
   \;=\;
   \frac{TM}{\,\overline{b}\,}.
\]

Therefore, if \(\lambda>\mu\), we have 
\[
\lambda T\mathbb{E}_{s,o\sim p(s,o)}\Big[\tfrac{(1+s/\hat{s})s+2os+(1+o)o}{2}\Big] - \frac{TM}{\,\overline{b}\,}\rightarrow +\infty, \text{ as }T \rightarrow +\infty.
\]
Hence, by combining with Equation (\ref{LLN}), the total number of unprocessed tokens diverges to infinity almost surely.

\end{proof}


\begin{proof}[Proof of Theorem \ref{thm:stable}]

We use a Lyapunov argument to show that, for large values of the Lyapunov function, the expected net change in the function is negative whenever \(\lambda < \mu(1-\delta)\). By the Foster–Lyapunov theorem \citep[Section~5.1]{bremaud2013markov}, this implies positive recurrence, and hence the system is stable. In the following derivation, one step corresponds to a duration of \(\bar{b}\) time.

Let \(V(t)\) denote the total outstanding memory demand (including the 
chunks in the prompt phase) in the system at time \(t\). Concretely, if a 
request is partially completed, only its remaining memory demand 
contributes to \(V(t)\). For a prompt \(i \in S^{(t)}\) with the index of the prefill chunk $c_i^t>0$, the remaining memory demand is given by
\[
v_{i}(t) := 
\begin{cases}
\bigl((c_i^t+1)\hat{s}+\ldots+s_i+(s_{i}+1)\bigr) + (s_{i}+2)+\ldots+(s_{i}+o_{i}), 
& \text{if } c_i^t < s_{i}/\hat{s}, \\[0.8em]
\bigl(s_{i}+d_i^t+1\bigr)+\ldots+(s_{i}+o_{i}), 
& \text{if } c_i^t \geq s_{i}/\hat{s},
\end{cases}
\]
which can be simplified to
\[
v_{i}(t) :=
\begin{cases}
 \frac{1}{2} (c_i^t\hat{s}+ s) \left( s/\hat{s}-c_i^t+1 \right) 
+ o_{i}s_{i} + \tfrac{(1+o_{i})o_{i}}{2}, 
& \text{if } c_i^t < s_{i}/\hat{s}, \\[0.8em]
s_{i}\left(o_{i} - d_i^t+1\right) 
+ \tfrac{(o_{i}-d_i^t+1)\,(o_{i}+d_i^t)}{2}, 
& \text{if } c_i^t \geq s_{i}/\hat{s}.
\end{cases}
\]

On the other hand, if a request has arrived but is still waiting (i.e., 
not yet started), namely \(i \in W^{(t)}\), then its outstanding memory 
demand is
\[
v_i(t) := g(s_i,o_i) 
= \tfrac{(1+s_i/\hat{s})s_i + 2o_is_i + (1+o_i)o_i}{2}.
\]

If a prompt has finished, it no longer contributes. Therefore, the total 
outstanding memory demand is
\[
V(t) = \sum_{i \in S^{(t)} \cup W^{(t)}} v_i(t).
\]

We now show that \(V(t)\) is a valid Lyapunov function. Specifically, 
we consider a sufficiently large constant 
\[
K = \left(\sup_{s,o \sim p(s,o)} g(s,o)\right)M,
\]
which corresponds to the situation where there are at least 
$M$ prompts either waiting in the queue or being processed by the system.

We will establish that
\begin{equation}
\mathbb{E}\bigl[V(t+1)|S^{(t)}\cup W^{(t)};\{\{c_i(t),d_i(t)\},i\in S^{(t)}\cup W^{(t)}\}\bigr] \leq V(t) - \varepsilon, 
\quad \text{whenever } V(t) > K,
\label{eq:lya}
\end{equation}
for some \(\varepsilon > 0\).

We first decompose \(V(t+1)\). Between time \(t\) and \(t+1\), some fraction of the old \(W(t)\) (the tokens present at time \(t\)) is completed, and a random number of new requests arrive. 
Hence, we can write:
\begin{equation}
  V(t+1)
  \;=\;
  \underbrace{V(t)\;-\;\bigl(\text{\# old memory fulfilled}\bigr)}_{\text{old backlog that remains}}
  \;+\;
  \bigl(\text{memory demand from newly arrived requests}\bigr).
  \label{eq:recursion}
\end{equation}

If
$
V(t)>\sup_{s,o\sim p(s,o)} g(s,o)M,
$
then there are more than \(M\) unfinished requests in \(S^{(t)}\cup W^{(t)}\), since each request contributes at most $\sup_{s,o\sim p(s,o)} g(s,o)$ to \(V(t)\). Under our work-conserving scheduling policy, the batching rule is maximal: it keeps adding feasible old requests until no remaining old request can be added without violating the memory constraint. 
Moreover, the next-step memory requirement of any single request is at most
\[
\operatorname{ess\,sup}_{s,o}(s+o)=\delta M .
\]
Therefore, if the total amount of old memory fulfilled in the next slot were strictly smaller than \(M(1-\delta)\), then the remaining memory capacity would be strictly larger than \(\delta M\). In that case, at least one remaining old request could still be admitted, contradicting the maximality of the batching rule. Hence,
\[
\bigl(\text{\# old memory fulfilled}\bigr)\ge M(1-\delta).
\]
On the other hand, we have
\[
\mathbb{E}[\text{memory demand from newly arrived requests}]=\lambda\bar{b}\mathbb{E}_{s,0\sim p(s,0)}[g(s,o)].
\]
Therefore, we have 
\begin{align*}
\mathbb{E}\bigl[V(t+1)|S^{(t)}\cup W^{(t)};\{a_i(t),i\in S^{(t)}\cup W^{(t)}\}\bigr]& <V(t)-M(1-\delta)+\lambda\bar{b}\mathbb{E}_{s,0\sim p(s,o)}[g(s,o)] \\
&=V(t)-(\mu(1-\delta)-\lambda)\bar{b}\mathbb{E}_{s,0\sim p(s,o)}[g(s,o)].
\end{align*}
Due to $\lambda <\mu(1-\delta)$ and $\mathbb{E}_{s,0\sim p(s,o)}[g(s,o)]>0$, we can choose $$\varepsilon=(\mu(1-\delta)-\lambda)\bar{b}\mathbb{E}_{s,0\sim p(s,o)}[g(s,o)],$$ which proves (\ref{eq:lya}).

\end{proof}
\section{Additional Numerical Plots} \label{append:num}

\subsection{Batch Processing Time Plot in Real Traces}
Figure \ref{fig:cdfappend} presents the cumulative distribution function (CDF) of batch execution times across several real-world traces. The subfigures are organized by workload: the first row corresponds to a prefill-decode (PD) ratio of 2:1, and the second row to a ratio of 4:1. The first column represents a low arrival rate (5 requests/second), while the second column represents a higher rate (20 requests/second). For context, Figure \ref{fig:cdf} in the main text shows the case for a 1:1 PD ratio at 5 requests/second. We observe that the batch execution time is nearly identical for different arrival rates $\lambda$.  However, as the PD ratio increases, we note a corresponding increase in the disparity of execution times. This variability presents a promising direction for future modeling efforts.
\begin{figure}[htbp]
    \centering
    \begin{subfigure}[b]{0.45\textwidth}
        \centering
        \includegraphics[width=\textwidth]{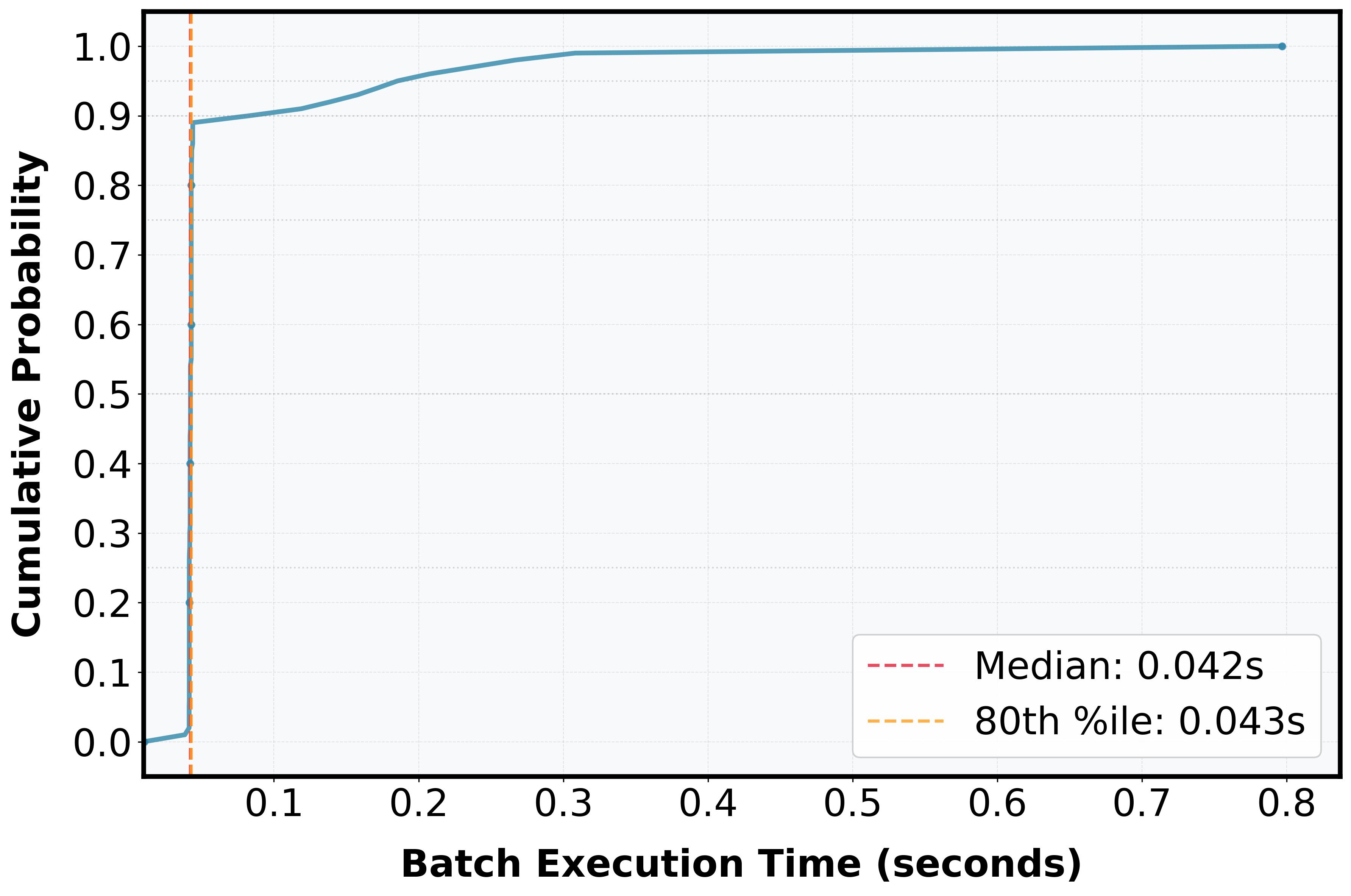}
        \caption{PD ratio 2:1, $\lambda=5$}
        \label{fig:cdf_a3}
    \end{subfigure}
    \hfill
    \begin{subfigure}[b]{0.45\textwidth}
        \centering
        \includegraphics[width=\textwidth]{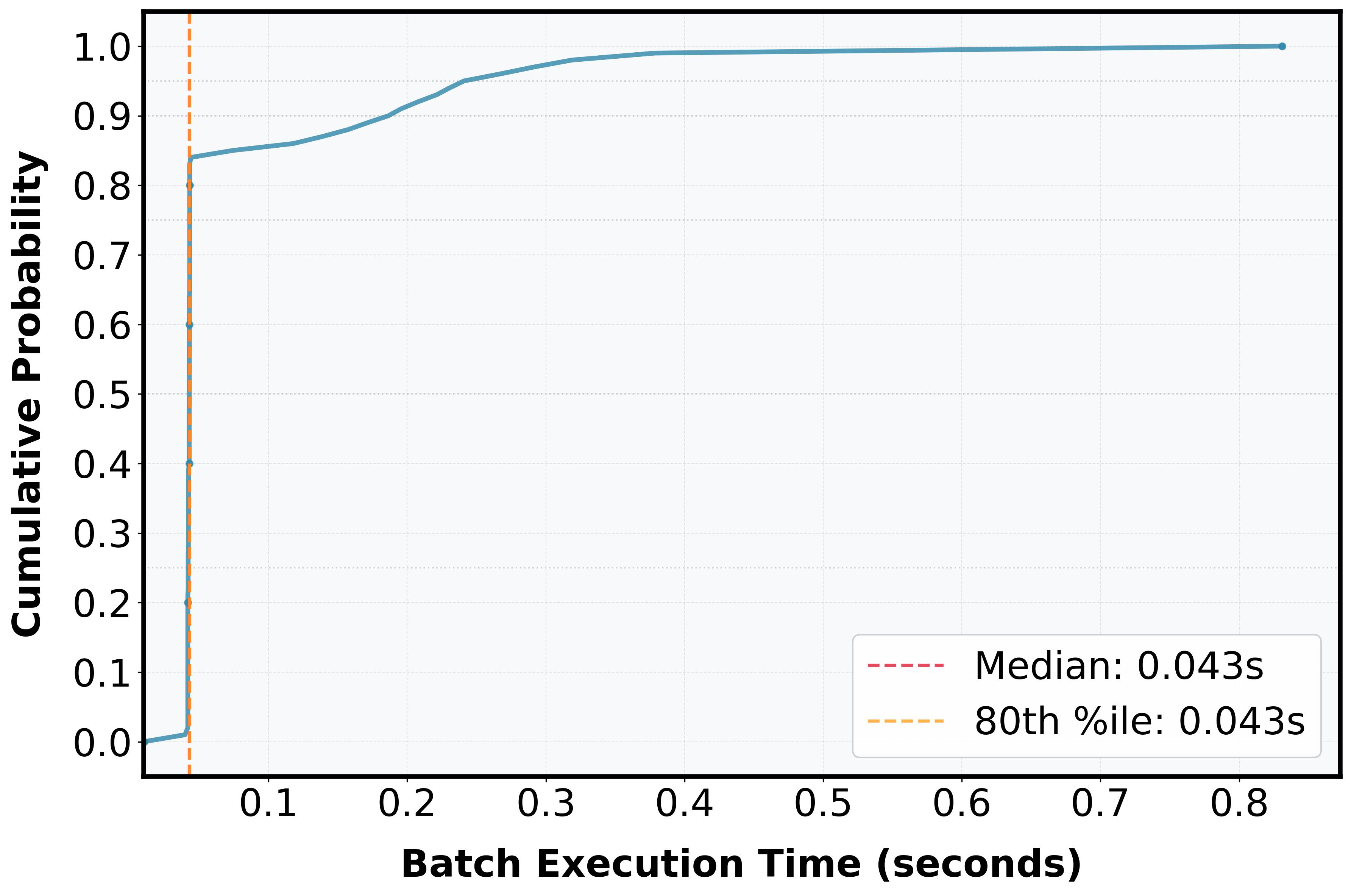}
        \caption{PD ratio 2:1, $\lambda=20$}
        \label{fig:cdf_a4}
    \end{subfigure}
    
    \vskip\baselineskip 
    
    \begin{subfigure}[b]{0.45\textwidth}
        \centering
        \includegraphics[width=\textwidth]{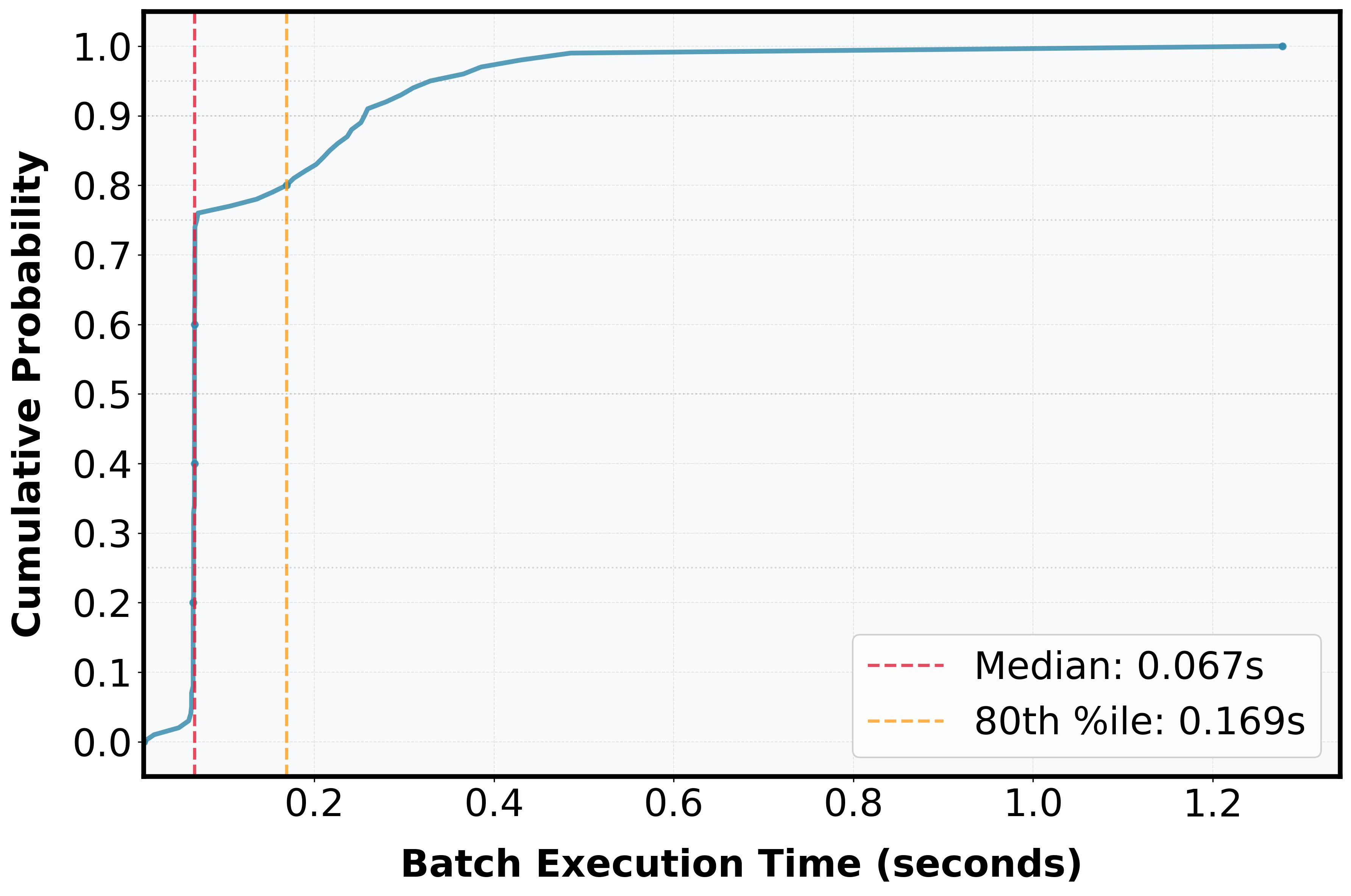}
        \caption{PD ratio 4:1, $\lambda=5$}
        \label{fig:cdf_a6}
    \end{subfigure}
    \hfill
    \begin{subfigure}[b]{0.45\textwidth}
        \centering
        \includegraphics[width=\textwidth]{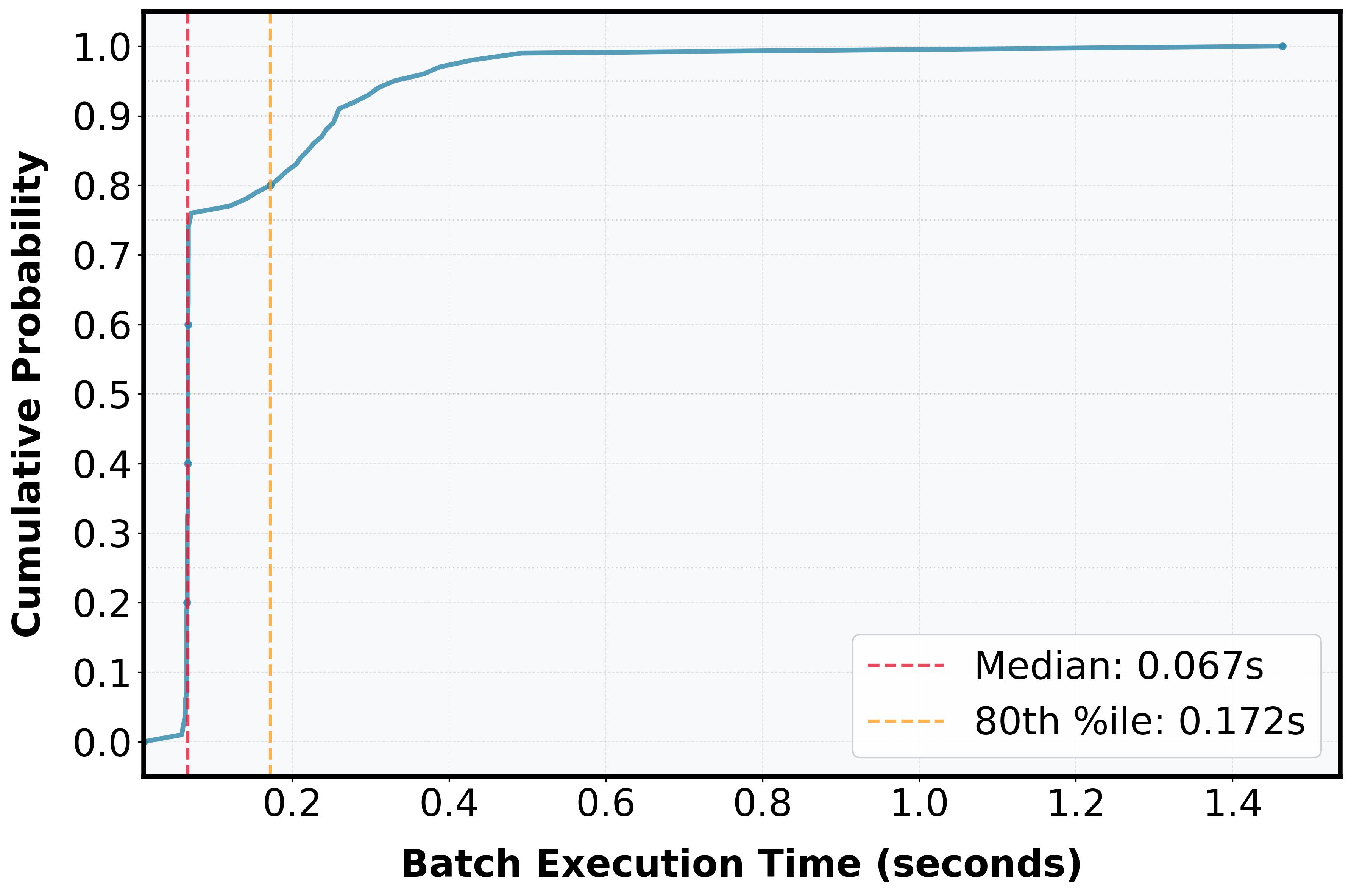}
        \caption{PD ratio 4:1, $\lambda=20$}
        \label{fig:cdf_a7}
    \end{subfigure}
    
    \caption{Cumulative Distribution Function (CDF) for Batch Execution Time under different PD ratios and arrival rates.}
    \label{fig:cdfappend}
\end{figure}

\subsection{CDF for Waiting Time of Each Request}

The near-linear growth of the CDF under $\lambda=4,5$ indicates an overloaded system. This provides further evidence for the system instability at this arrival rate and explains why the empirical processing rate $\mu_{\text{gpu}}$ converges for $\lambda \geq 5$. Moreover, under $\lambda=2$, we can find the system is stable, and almost no request waits in the queue. Finally, for a special case $\lambda=3.387=\mu_{\text{gpu}}$, which is called as an unstable system in queueing theory, the growth of CDF indicates a linear trend but a non-smooth shape. This is also consistent with the theory of queueing for the unstable system.

\begin{figure}[htbp]
    \centering
    
    \begin{subfigure}{0.32\textwidth}
        \centering
        \includegraphics[width=\textwidth]{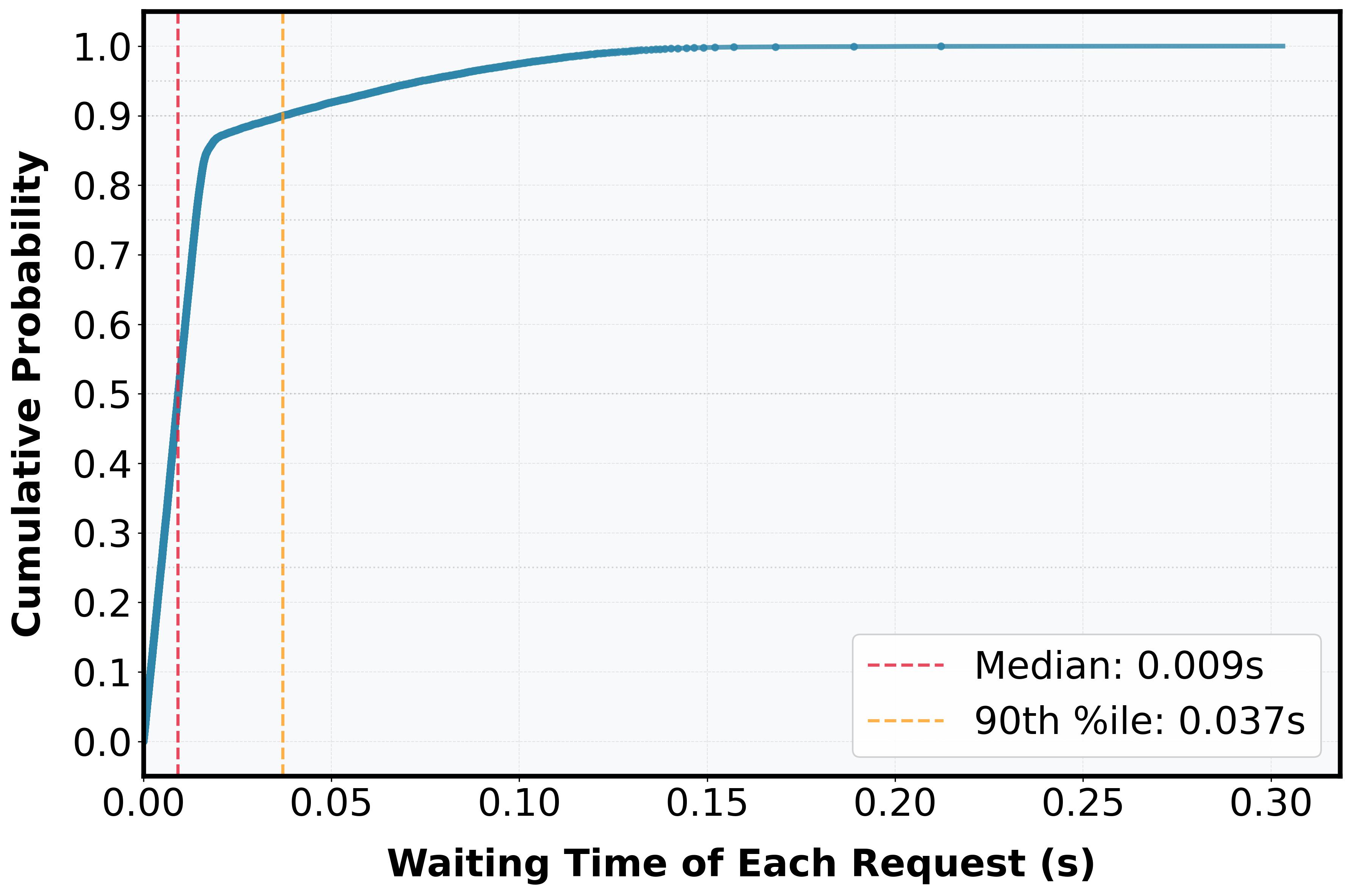}
        \caption{$\lambda=2$}
        \label{fig:queuev2}
    \end{subfigure}
  \hfill
    \begin{subfigure}{0.32\textwidth}
        \centering
        \includegraphics[width=\textwidth]{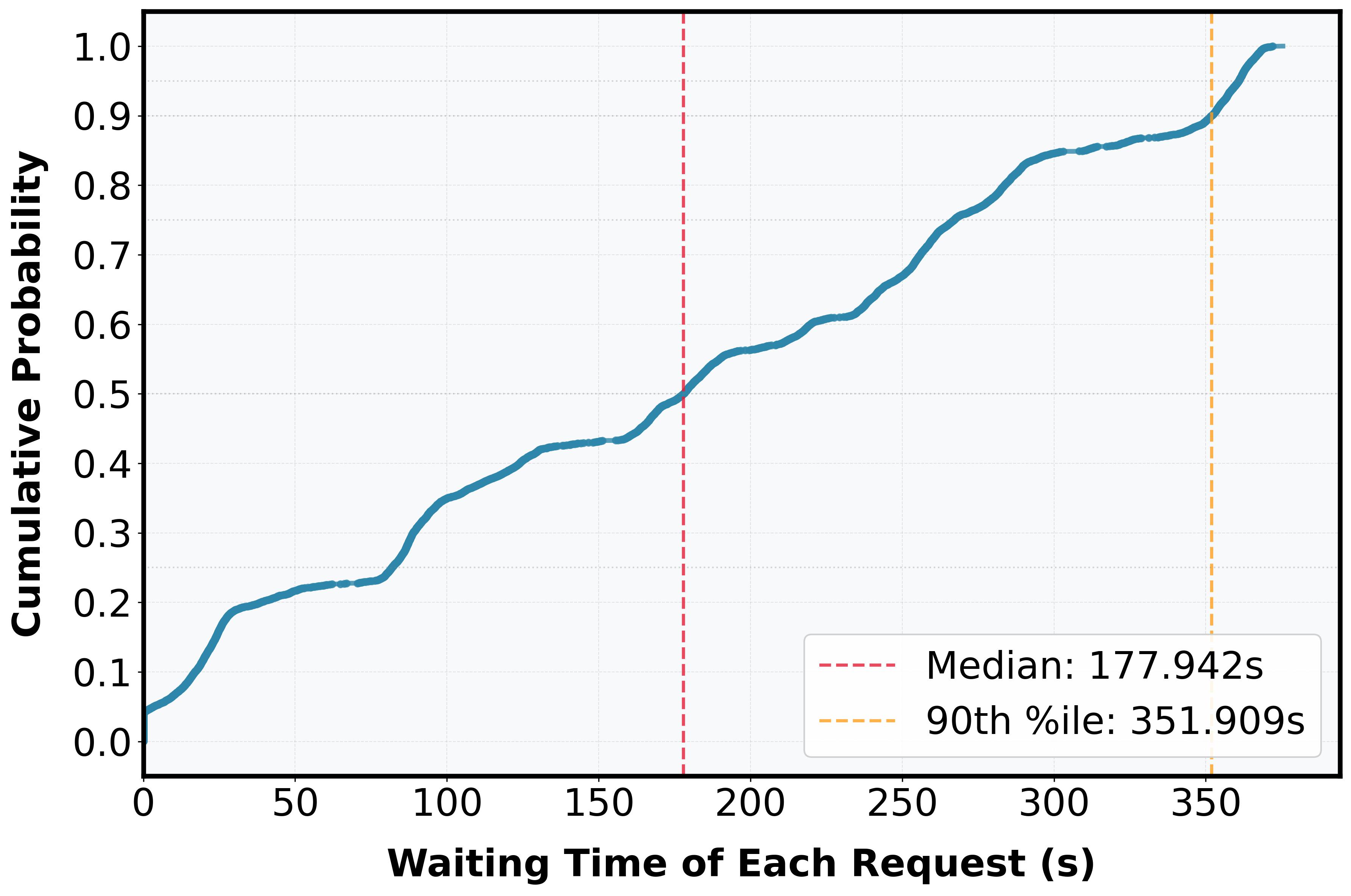}
        \caption{ $\lambda=3.387$}
        \label{fig:queuev3}
    \end{subfigure}
\hfill
    \begin{subfigure}{0.32\textwidth}
        \centering
        \includegraphics[width=\textwidth]{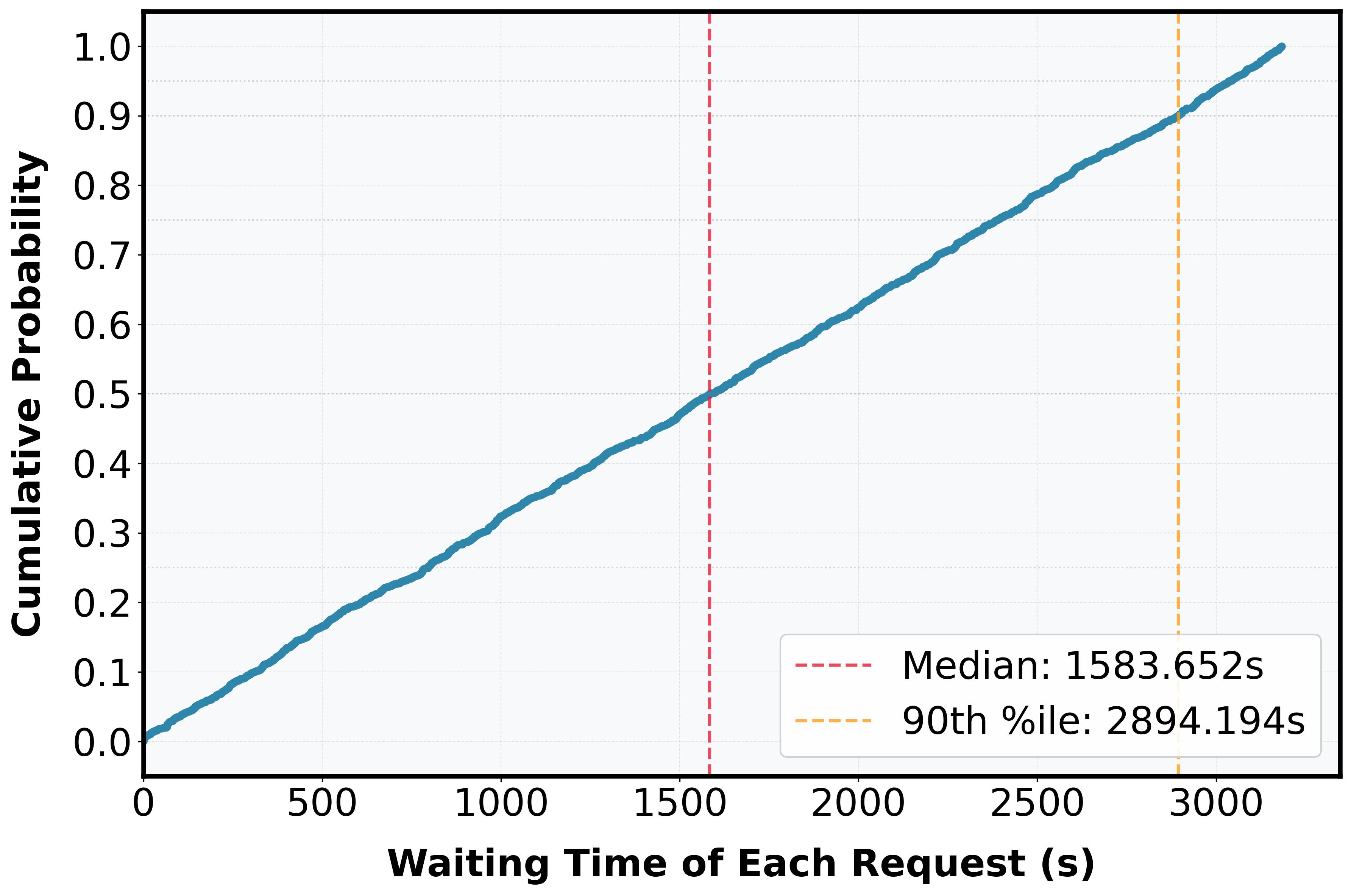}
        \caption{$\lambda=4$}
        \label{fig:queuev4}
    \end{subfigure}
    
    \caption{Cumulative Distribution Function (CDF) for Waiting Time of Each Request.}
    \label{fig:queuevisual}
\end{figure}

 {
\section{Stable Condition under an 8:1 P/D Ratio}
\label{sec:append_8_1}

This section presents a complementary numerical experiment under a more extreme workload compared to those in Section \ref{subsec:single-GPU}. While Section \ref{subsec:single-GPU} evaluated P/D ratios of 1:1, 2:1, and 1:2—where batch processing times were largely consistent—we observe that as the P/D ratio increases, the tail of the processing time distribution becomes significantly heavier, as shown in Figure \ref{fig:cdfappend8}.

\begin{figure}[htbp]
    \centering
    \includegraphics[width=0.7\textwidth]{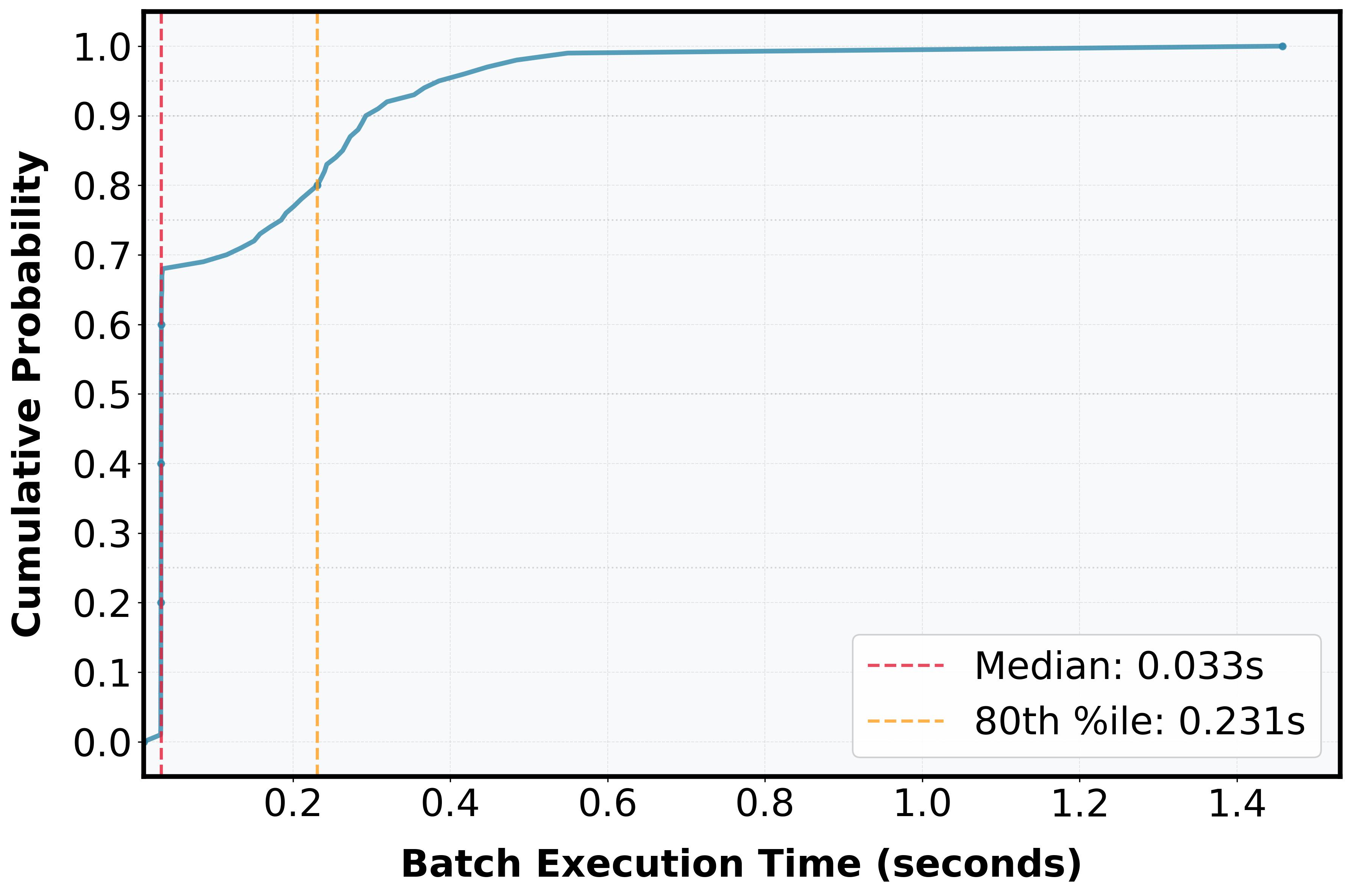}
    \caption{Cumulative Distribution Function (CDF) for Batch Execution Time under an 8:1 P/D ratio.}
    \label{fig:cdfappend8}
\end{figure}

Figure \ref{fig:cdfappend8} shows that while the first ~70\% of batches have nearly identical processing times, the remaining ~30\% exhibit a heavy tail. In this regime, neither the median nor the mean is a robust estimator for $\bar{b}$: the median ignores the heavy tail, while the mean is dominated by it.

A standard statistical approach for heavy-tailed distributions is the \textbf{trimmed mean}, which discards a top percentage $x\%$ of the data (treating them as outliers) and calculates the mean of the remainder. Common choices for $x$ are 5 or 10. Figure \ref{fig:trim} shows the CDF of the trimmed batch processing time. Table \ref{tab:stable_condition_8} compares the results using these two trimmed-mean estimators for $\bar{b}$.

\begin{table}[h]
\centering
\caption{Stable Condition of a Single GPU under an 8:1 P/D Ratio.}
\label{tab:stable_condition_8}
\begin{tabular}{lccc}
\toprule
\textbf{$x\%$ Trimmed Mean} & $\bm{\mu_{\text{gpu}}}$ & $\bm{\mu_{\text{theory}}}$ & \textbf{GAP} \\
\midrule
5\% & 5.470 & 4.977 & 9.0\% \\
10\% & 5.470 & 5.862 & 7.2\% \\
\bottomrule
\end{tabular}
\end{table}

The results in Table \ref{tab:stable_condition_8} demonstrate that even under the extreme heavy-tailed distribution of an 8:1 P/D ratio, our theoretical model remains accurate. Using a robust estimator for $\bar{b}$ yields a predicted service rate $\mu_{\text{theory}}$ that is close to the empirically measured $\mu_{\text{gpu}}$, with gaps of only 9.0\% and 7.2\%. This highlights the robustness of our theoretical approach when combined with appropriate statistical estimation techniques.

\begin{figure}[htbp]
    \centering
    \begin{subfigure}[b]{0.45\textwidth}
        \centering
        \includegraphics[width=\textwidth]{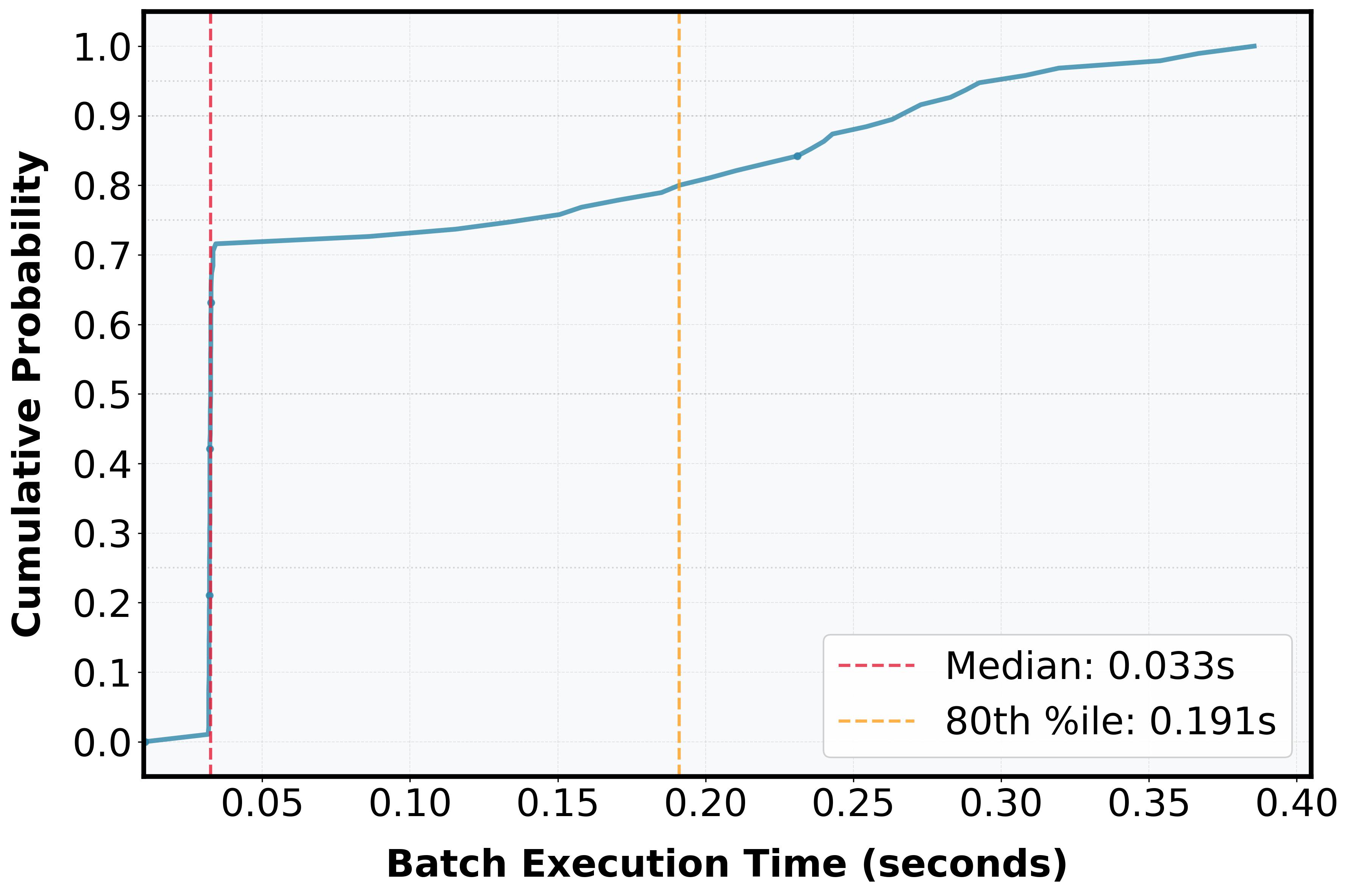}
        \caption{$5\%$ Trimmed Batch Execution Time}
        \label{fig:x5}
    \end{subfigure}
    \hfill
    \begin{subfigure}[b]{0.45\textwidth}
        \centering
        \includegraphics[width=\textwidth]{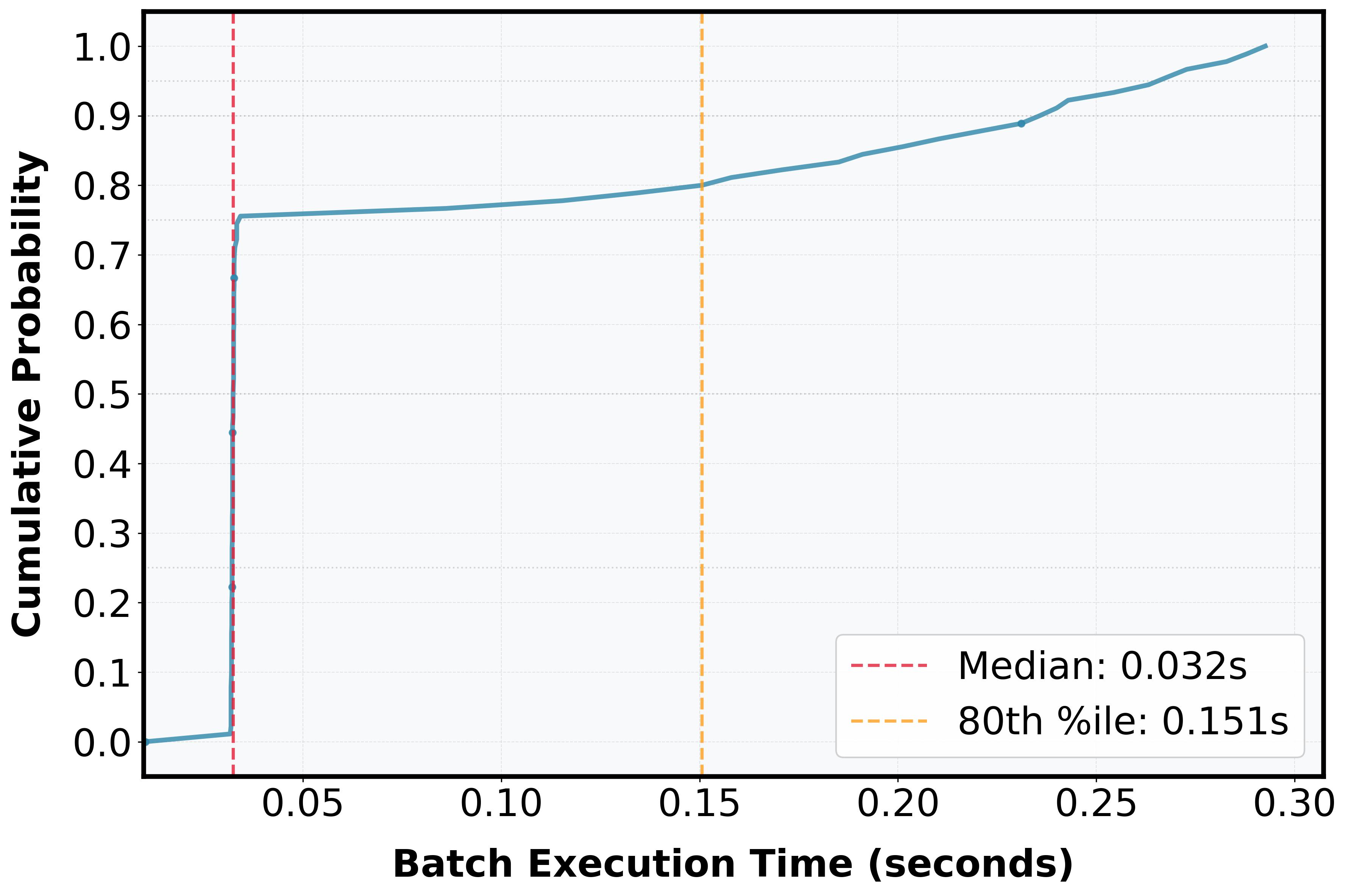}
        \caption{$10\%$ Trimmed Batch Execution Time}
        \label{fig:x10}
    \end{subfigure}
    
    \caption{Trimmed Cumulative Distribution Function (CDF) for Batch Execution Time.}
    \label{fig:trim}
\end{figure}
}

\end{document}